\documentclass[journal]{IEEEtran}
\usepackage{amsmath,amsfonts, amsthm,amssymb}
\usepackage{algorithmic}
\usepackage{algorithm}
\usepackage{graphicx}      
\usepackage[caption=false,font=normalsize,labelfont=sf,textfont=sf]{subfig}
\usepackage{textcomp}
\usepackage{stfloats}
\usepackage{url}
\usepackage{verbatim}
\usepackage{graphicx}
\usepackage{bbm}
\usepackage{makecell}

\usepackage{cite}

\newtheorem{theorem}{Theorem}
\newtheorem{lemma}{Lemma}

\hyphenation{op-tical net-works semi-conduc-tor IEEE-Xplore}
\newcommand{\RNum}[1]{\uppercase\expandafter{\romannumeral #1\relax}}
\usepackage[inline]{enumitem}
\usepackage[dvipsnames]{xcolor}

\usepackage{ifthen}
\newboolean{showmodification}
\setboolean{showmodification}{true}

\newcommand{\Tulga}[1]{\textcolor{Red}{[Tulga: #1]}}
\newcommand{\Siyuan}[1]{\textcolor{Green}{[Siyuan: #1]}}
\newcommand{\Congkai}[1]{\textcolor{Cyan}{[Congkai: #1]}}
\newcommand{\Yufei}[1]{\textcolor{Purple}{[Yufei: #1]}}
\newcommand{\Discussion}[1]{\textcolor{Blue}{[Discussion: #1]}}

\ifthenelse{\boolean{showmodification}}{
}
{
\renewcommand{\Tulga}[1]{}
\renewcommand{\Siyuan}[1]{}
\renewcommand{\Congkai}[1]{}
\renewcommand{\Yufei}[1]{}
\renewcommand{\Discussion}[1]{}
}
\usepackage[dvipsnames]{xcolor}

\newboolean{PngBool}
\setboolean{PngBool}{false} 

\usepackage[normalem]{ulem} 
\usepackage{etoolbox}

\makeatother

\begin{document}

\title{Spatial Envelope MPC: High Performance Driving without a Reference}
\author{Siyuan Yu$^\dagger$, Congkai Shen$^\dagger$, Yufei Xi, James Dallas, Michael Thompson, John Subosits, Hiroshi Yasuda,  Tulga~Ersal$^*$ 
\thanks{$^\dagger$These authors contributed equally to this work.}
\thanks{* Corresponding author}
\thanks{ S. Yu, C. Shen, Y. Xi, and T. Ersal are with the Department of Mechanical Engineering, University of Michigan, Ann Arbor, MI 48109. (email: \{johnysy, cosh, xiyufei, tersal\}@umich.edu)}
\thanks{J. Dallas, M. Thompson, J. Subosits, H. Yasuda are with Toyota Research Institute, Los Altos, CA 94022. (email: \{james.dallas, michael.thompson, hiroshi.yasuda, john.subosits\}@tri.global)}
\thanks{Toyota Research Institute provided funds to support this work.}
}



\maketitle

\begin{abstract}
This paper presents a novel envelope-based model predictive control (MPC) framework designed to enable autonomous vehicles to handle high-performance driving across a wide range of scenarios without a predefined reference.
In high-performance autonomous driving, safe operation at the vehicle’s dynamic limits requires a real-time planning and control framework capable of accounting for key vehicle dynamics and environmental constraints when following a predefined reference trajectory is suboptimal or even infeasible.	
State-of-the-art planning and control frameworks, however, are predominantly reference-based, which limits their performance in such situations.
To address this gap, this work first introduces a computationally efficient vehicle dynamics model tailored for optimization-based control and a continuously differentiable mathematical formulation that accurately captures the entire drivable envelope.
This novel model and formulation allow for the direct integration of dynamic feasibility and safety constraints into a unified planning and control framework, thereby removing the necessity for pre-defined references.
The challenge of envelope planning, which refers to maximally approximating the safe drivable area, is tackled by combining reinforcement learning with optimization techniques.
The framework is validated through both simulations and real-world experiments, demonstrating its high performance across a variety of tasks, including racing, emergency collision avoidance and off-road navigation. 
These results highlight the framework’s scalability and broad applicability across a diverse set of scenarios.
\end{abstract}

\begin{IEEEkeywords}
Model predictive control, autonomous vehicle, reference-free trajectory planning and control, autonomous racing, emergency collision avoidance, off-road navigation
\end{IEEEkeywords}

\section{Introduction}\label{sec:intro}
High-performance autonomous driving technology has advanced rapidly and significantly in the past decade \cite{betz2022autonomous}.
The grand opening of the first autonomous racing competition has thrust autonomous driving technology into the spotlight, showcasing performance that rivals expert human drivers in tasks demanding high levels of precision and speed \cite{wischnewski2022indy}.	
Recent developments in autonomous driving have enabled reliable performance in advanced collision avoidance capabilities \cite{zhang2020optimization, funke2016collision, wurts2021collision}, complex drifting maneuvers \cite{weber2023modeling, goh2016simultaneous, goh2024beyond}, and challenging off-road navigation \cite{dallas2021terrain, shen2023efficient, yu2021nonlinear}. 
These advancements have greatly expanded the potential applications of autonomous vehicles across various fields.	
To execute these extreme maneuvers safely, the vehicle must operate at the limits of its capabilities.
However, this is a challenge, because even a small deviation from the desired trajectory can result in catastrophic outcomes.

In this regard, the autonomous system should be designed to effectively utilize all feasible operational regions including the limits, ensuring performance is not unnecessarily sacrificed for safety via overly conservative constraints.
For instance, every inch of the race track is critical for drivers to adjust their strategies and enhance their performance.	
In addition, the algorithm's design should ensure scalability when applied to a diverse set of scenarios with varying levels of complexity. 
In this context, the term `scalability' refers to the algorithm's ability to handle an increasing amount of tasks or scenarios without a significant compromise in performance or without significant redesign.
It is also desirable that the system be capable of generating optimal trajectories online in real-time without having to depend on a predefined reference, because deviations from the original plan may render the predefined reference suboptimal or even infeasible.

However, as the literature review in Sec. \ref{sec:background} reveals, existing methods typically rely on predefined references and therefore either limit vehicle performance to the quality of that reference or do not provide optimal and scalable solutions in performance-demanding scenarios.	

To address this gap, this paper presents a novel, spatial envelope model predictive control (MPC) framework for reference-free high performance driving.
The proposed framework builds upon a new, computationally efficient vehicle dynamics model tailored for closed-loop optimization based planning and control, capturing the essential dynamics required for aggressive maneuvers. 
A twice continuously differentiable mathematical formulation of the entire driving envelope is introduced to conservatively estimate the drivable region to be used in MPC.
This enables the MPC to break from the restrictive constant-speed assumption previously used in spatial envelope MPC \cite{beal2012model, brown2017safe, wurts2020collision, wurts2021collision}, and instead optimize speed, as well, to maximize performance while maintaining safety.

To the authors' knowledge, this is the first published MPC algorithm that is experimentally validated for safe and effective high-performance driving at the handling limits in a fully reference-free setting.
The algorithm is validated across a wide range of scenarios, including racing, off-road navigation and emergency collision avoidance, demonstrating both generality and real-world applicability.

Finally, a new spatial envelope planning technique is introduced to further enhance applicability. A hybrid approach that combines optimization-based formulation with reinforcement learning is developed to segment the drivable area into blocks, enabling scalable planning in complex environments.

The original contributions are summarized as follows:

\begin{enumerate}
    \item A validated 3-DoF single-track dynamic model that accounts for longitudinal load transfer and the friction circle limit, while remaining computationally efficient in a fully reference-free setting.
    \item A hard constraint formulation to mathematically express the spatial envelope with guaranteed conservativeness.
    \item A real-time Model Predictive Control (MPC) formulation that leverages the first two contributions to optimize vehicle trajectories online without any predefined path.
    \item A reinforcement learning approach to design a set of blocks to approximate arbitrary shapes of spatial envelopes in real time.
    \item Validation of the proposed MPC formulation in racing, emergency collision avoidance and off-road environments.

\end{enumerate}

The rest of the paper is organized as follows. Sec. \ref{sec:background} reviews the relevant literature. Sec.~\ref{sec:methods}-A describes the 3 DoF single-track vehicle dynamics. Sec.~\ref{sec:methods}-B describes the MPC formulation including the conservative spatial envelope constraints. Sec.~\ref{sec:methods}-C describes the real-time spatial envelope planner. Sec.~\ref{sec:model_fidelity} describes the model fidelity test. The results and discussion of the proposed spatial envelope MPC are presented in Sec.~\ref{sec:model_fidelity}.
From Sec. \ref{sec:Results}-A to E, the simulation and experimental results of spatial envelope MPC are conducted and analyzed in multiple scenarios.
In Sec.~\ref{sec:methods}-F, the proposed spatial envelope planning technique is presented and discussed. 
Finally, Sec.~\ref{sec:conclusion} concludes the study.

\section{Background}
\label{sec:background}
A conventional autonomy stack typically divides the navigation task into two layers.
The first layer focuses on the generation of a collision-free path or trajectory as a reference, whereas the second layer is designed to track this reference \cite{stano2023model}.

One common approach for the first layer is graph search.
In \cite{erke2020improved}, an improved A* algorithm is used to plan a smooth path for the vehicle in urban areas. 
To tackle dynamic environments, D* \cite{osmankovic2017all} is used to reuse previous information. 
To account for nonholonomic constraints, the hybrid-state A* \cite{dolgov2008practical} uses a kinematic model to expand the graph and the Reeds-Sheeps curve as the heuristic \cite{reeds1990optimal}. 
However, the size of the graph increases exponentially with higher resolutions, hindering scalability.
In \cite{lavalle1998rapidly}, the Rapidly Exploring Randomized Tree (RRT), a type of incremental sampling method, gradually constructs a tree graph towards the goal point based on heuristics and randomized sampling. 
In \cite{karaman2011sampling}, RRT* introduces a rewiring function to achieve asymptotic optimality. 
Due to the flexibility of the steering function, several extensions of the RRT-based algorithm have been developed, such as incorporating a kinematic vehicle model in \cite{webb2013kinodynamic} and considering dynamic constraints in \cite{hwan2013optimal}. 
Although RRT-based methods have been widely adopted, their inherent sampling nature creates a trade-off between computation time and solution quality, and the solution is not repeatable.
Thus, despite their success and widespread use in motion planning, graph search methods suffer from issues such as lack of considerations in dynamic constraints, the curse of dimensionality, and nonrepeatability.	

To fully exploit the mobility of the vehicle, a dynamic model can be leveraged to understand its limits.
In \cite{beal2012model}, a linear single-track model is utilized to constrain the vehicle's motion within the stable region, assuming constant longitudinal speed and small slip angles.	
In \cite{turri2013linear}, a linear model incorporating load transfer is developed to address the lane-keeping problem using curvilinear coordinates along the lane.	
In \cite{alcala2020autonomous}, a similar model is employed in a racing application, but it is nonlinear because the longitudinal speed is not assumed to be constant.	
Curvilinear coordinates allow for the natural expression of race progression and track boundaries.	
However, the model's accuracy is compromised when the vehicle significantly deviates from the reference line. 
In \cite{goh2020toward}, the friction circle limit is considered by bounding the maximum lateral tire force within the Fiala brush tire model despite the fact that this approach can introduce numerical issues during the optimization process.	
Thus, existing vehicle models either oversimplify the dynamics or lack computational suitability for a reference-free framework.

Model predictive control (MPC) effectively harnesses the benefits of having a dynamical model to optimize the vehicle’s trajectory and integrate this knowledge with the desired costs and constraints.	
Its capability to consider obstacle constraints naturally delineates the drivable areas on the map so that it can plan safe and optimal trajectories.	
In \cite{liu2017combined, hu2020steering, lim2008nonlinear}, MPC is integrated with obstacle avoidance constraints to ensure safe vehicle operation.	
To precisely formulate the hazard zones, researchers utilized exact polygons to define collision avoidance constraints, thereby maximizing the traversable area \cite{fan2024exact}.	
Expanding on static obstacle avoidance scenarios, moving obstacle avoidance has also been addressed \cite{li2019dynamic, febbo2017moving}.
Beyond typical on-road driving scenarios, MPC has also been employed to accommodate a terramechanics-based vehicle model for navigating off-road environments without predefined trajectories \cite{dallas2021terrain}.	
Despite the success MPC has achieved in this field, these solutions have only been validated in relatively simple scenarios and do not scale well with the increasing complexity that vehicles might encounter in real-world environments.	

To avoid these challenges, a conventional autonomy stack may choose to employ MPC only as a tracker in the second layer.
In studies such as \cite{turri2013linear,gutjahr2016lateral}, linear MPC is used to track the vehicle along a given trajectory.	
To further enhance the mobility of the vehicle model, nonlinear MPC, which incorporates nonlinear vehicle dynamics, is introduced to handle vehicle control \cite{beal2012model,xu2019design, hu2020lane}.	
The maturity of this technique has contributed to the delivery of state-of-the-art, high-performance autonomous driving solutions.	
In \cite{brown2019coordinating}, tire forces between longitudinal and lateral directions are coordinated using nonlinear MPC with a Frenet coordinate-based vehicle model to track a given trajectory.	
Building on this, researchers adopted a hierarchical approach in which the predictive control layer integrates a single-track model with longitudinal weight transfer dynamics, while the chassis control layer manages lateral brake distribution to track a pre-defined racing trajectory.
\cite{dallas2023adaptive}.	
This divide-and-conquer strategy has demonstrated significant advantages in scalability by addressing collision concerns in the planning layer and limiting the controller to trajectory following.
However,  it also raises new issues. 
Namely, the framework's performance depends on the quality of the path produced by the high-level planner.
Thus, planning optimal and dynamically feasible trajectories in real-time remains a challenge.

To retain the scalability benefits offered by the trajectory tracking method without overly restricting the vehicle's operating region, researchers developed a spatial envelope based MPC method that explicitly defines the drivable area as spatial constraint for the vehicle \cite{brown2017safe,lee2015automated,jiang2021novel}.
Rather than informing the MPC of hazard zones, this method constrains the MPC to plan trajectories exclusively within the predefined safe area.	
The advantage of prescribing a safe area lies in its ability to effectively reduce the number of constraints needed to enforce the safety requirements.	
Researchers in \cite{yu2022autonomous, wurts2020collision} employ this concept, adopting both linear and nonlinear MPC for emergency collision maneuvers, whereas the work in \cite{wurts2021collision} extends the envelope-based approach to curved roads for broader applicability.	
However, the envelope-based MPC discretizes the safe driving area into segments, constraining trajectory points to remain within a single segment.	
This approach requires identifying the relevant segment for each specific trajectory point prior to solving the MPC, which presents a problem, because segment identification is neither a continuous nor differentiable process.	
Consequently, these methods often assume a fixed speed, further restricting the vehicle's operating space by disallowing longitudinal maneuvers. 
This limitation significantly hinders the practical application of this concept.

Another problem to address with spatial envelope MPC is the design of the envelopes.
In \cite{bernhard_risk-based_2022}, researchers defined a drivable envelope as a set of polygonal sections within which every trajectory remains inside the safety region. 
These envelopes were defined heuristically based on a pre-planned trajectory. 
However, this method does not encompass the entire drivable region along the route.
In \cite{wurts2021collision}, researchers sampled parallelograms from road boundaries to form the drivable envelopes. 
Nevertheless, the manual determination of the sampling interval restricts the applicability of this approach to diverse road conditions.
For optimizing polygonal shapes within a given boundary, similar efforts have been made in the field of geometric optimization. 
For example, algorithms have been developed to find the largest area rectangle within a given contour, and methods for solving the rectangle packing problem have been proposed \cite{daniels_finding_1997,molano_finding_2012,liu_optimization_2014,korf_optimal_2010}. 
However, these algorithms either cannot ensure that the resulting rectangles intersect or cannot provide solutions for rectangles of varying sizes, limitations that are problematic for planning polygons for drivable envelopes.

In contrast to the conventional autonomy stack architecture, the end-to-end approach has become another trend due to rapid developments in machine learning. 
In \cite{bojarski2017explaining}, the PilotNet CNN architecture is proposed for autonomous driving. 
The PilotNet directly models the relationship between  input images and steering values.
However, the CNN architecture cannot capture temporal information. 
In \cite{weiss2020deepracing}, a deep neural network (DNN) called AdmiralNet, a convolutional neural network (CNN) integrated with a Long Short-Term Memory (LSTM) network, is used to handle time-varying information and is trained on annotated data generated by human operators. 
However, employing humans to generate training data is expensive, especially for a DNN, which requires a large amount of data. 
Instead of treating DNN as a regression tool, additional research efforts have focused on the field of reinforcement learning. 
In \cite{niu2020two} and \cite{remonda2021formula}, Deep Deterministic Policy Gradient (DDPG), one of the most representative actor-critic (AC) algorithms, is used to learn the policies. 
Although DDPG is effective for continuous control, it suffers from the overestimation of Q values. 
To address this issue, Soft Actor-Critic (SAC) is proposed in \cite{song2021autonomous}, which adopts a maximum entropy framework to encourage exploration and improve robustness.
However, for the end-to-end approach, interpreting the learned policy is still an open question. 
Additionally,  performance is not guaranteed when out-of-distribution events occur due to its data-driven nature.

Based on the literature review above, a key gap has been identified: the absence of a real-time reference-free trajectory planning and control framework that can push the vehicle to its limits while remaining scalable to a diverse set of scenarios.	
Bridging this gap is essential for advancing high-performance autonomous driving and is the goal of this work.

\section{Methods}\label{sec:methods}

\subsection{Vehicle Dynamics} \label{sec:VD}

In this section, a single-track vehicle model is introduced that effectively captures the essential dynamics in highly dynamic maneuvers. 
The model is also specifically tailored for integration with optimization-based motion controllers, significantly enhancing its real-time capability. 
The model is illustrated in Fig.~\ref{fig:KS}-A.

The state and control vectors are defined as: 
\begin{equation}
  \xi:=\left[\begin{array}{c}
x \\
y \\
v \\
r \\
\psi \\
u_x \\
\delta_f \\
a_x \\
\end{array}\right]=\left[\begin{array}{c}
\text { global x position of C.G. point} \\
\text { global y position of C.G. point} \\
\text { lateral speed } \\
\text { yaw rate } \\
\text { yaw angle } \\
\text { longitudinal speed } \\
\text { front tire steering angle } \\
\text { longitudinal acceleration }\\
\end{array}\right]
\label{eq:state_vector}
\end{equation}
\begin{equation}
  \zeta:=\left[\begin{array}{c}
  \dot{\delta}_f \\
  j_x\\
\end{array}\right]=\left[\begin{array}{c}
\text { front tire steering rate} \\
\text { longitudinal jerk} \\
\end{array}\right]
\label{eq:control_vector}
\end{equation}
The vehicle dynamics are represented in state space as: 
\begin{equation} \label{eq:state_space}
  \dot{\xi} = \mathcal{V}[\xi(t),\zeta(t)] = A(\xi) +B\zeta
\end{equation}
\begin{gather}
 \label{eq:VM} A(\xi)=\left[\begin{array}{c}
u_x\cos \psi - v \sin \psi\\
u_x\sin \psi + v \cos \psi\\
(F_{yf} \cos\delta_f + F_{xf} \sin \delta_f+F_{yr})/M - u_x r\\
((F_{yf} \cos\delta_f + F_{xf} \sin\delta_f )L_f-F_{yr}L_r)/I_{zz} \\
r\\
a_x + r  v - \frac{F_{yf} \sin \delta_f}{M}\\
0 \\
0 \\
\end{array}\right] \\
\label{eq:VB}B^{T} =\left[\begin{array}{llllllll}
 0 & 0 & 0 & 0 & 0 & 0 & 1 & 0\\
 0 & 0 & 0 & 0 & 0 & 0 & 0 & 1
\end{array}\right]
\end{gather}

Here $M$ is the vehicle total mass, $I_{zz}$ is the moment of inertia, and $L_f$ and $L_r$ are distances from the vehicle center of gravity to the vehicle front and rear axles, respectively. 
$F_{yf}$ and $F_{yr}$ are the lateral forces of the front and rear tires. 
$F_{xf}$ is the longitudinal force of the front tire.
Note that $a_x$ is defined as $\frac{Fx}{M}$ instead of $\frac{du_x}{dt}$, where $F_x$ is the total tire force along the longitudinal direction. 
By this definition, longitudinal load transfer is expressed in a linear form, which speeds up the computation of lateral tire forces and hard constraints for the friction circle.
The vehicle is assumed to be rear-wheel driven, with braking applied on both front and rear wheels with a specific braking ratio $b_r$.
Thus, the front-to-rear distribution of longitudinal tire forces is expressed as follows:
\begin{align}
    \label{eq:acc_longitudinal}F_{xf} &= \begin{cases}
        0 &\text{if }F_x \geq 0\\
        b_r F_x &\text{if }F_x < 0
    \end{cases}\\
    \label{eq:br_longitudinal}F_{xr} &= \begin{cases}
        F_x &\text{if }F_x \geq 0\\
        (1 - b_r) F_x &\text{if }F_x < 0
    \end{cases}
\end{align}
where $F_{xf}$ and $F_{xr}$ are the front and rear longitudinal tire forces, respectively. 
However, these expressions are not suitable for optimal control due to the if-condition not being differentiable.
Therefore, a proximal representation of the if-condition is employed using the sigmoid function:
\begin{align}
    \mathbbm{1} &= (1 - \frac{1}{1 + e^{-p_{f} F_x}}) \\
    F_{xf} &= \mathbbm{1} \cdot b_r F_x \\
    F_{xr} &= F_x - F_{xf}
\end{align}

where \( p_{f} \) is a tunable parameter that controls the sharpness of the transition between the \textbf{if} and \textbf{else} conditions. It is also used in Eq.~\eqref{eq:Lateral_Force}.
The longitudinal load transfer is expressed as follows:
\begin{align}
    \label{eq:front_load}F_{zf} &= \frac{L_r}{L_f + L_r} Mg  - K_{z} a_x \\
    \label{eq:rear_load}F_{zr} &= \frac{L_f}{L_f + L_r} Mg + K_{z} a_x \\
    \label{eq:load_const}K_{z} &= \frac{M g h}{L_f + L_r}
\end{align}
where $g$ is the gravitational constant.

By derating the lateral tire force with the longitudinal tire force as discussed in \cite{laurense2019integrated}, the formulation is structured such that the longitudinal tire force required by $a_x$ is always satisfied first and the maximum lateral force is then derived considering that the combined total tire force cannot exceed the maximum allowable friction force (see Fig.~\ref{fig:KS}-B1). 
The maximum lateral tire force could thus be expressed as: 
\begin{align} \label{eq:fymax}
    F_{y\bullet_{\max}} &= \sqrt{ 1 - \left(\frac{F_{x\bullet}}{\mu_{\bullet} F_{z\bullet}} \right)^2 } \mu_{\bullet} F_{z\bullet}
\end{align}
where $\bullet$ is a placeholder for $f, r$, representing the front and rear axles, respectively, and $\mu$ is the friction coefficient.
However, during the initialization of the optimization process, it is possible that the magnitude of $\mu_{\bullet} F_{z\bullet}$ becomes smaller than $F_{x\bullet}$, leading to numerical errors in the solver. 
To avoid this issue, a softplus function is applied prior to taking the square root:
\begin{align}\label{eq:Lateral_Force}
   F_{y\bullet_{\max}} &= \sqrt{\frac{\log (1 + e^{p_f (1 - (\frac{F_{x\bullet}}{\mu_\bullet F_{z\bullet}})^2) })}{p_f}} \mu_\bullet F_{z\bullet}
\end{align}
When $|\mu_{\bullet} F_{z\bullet}| > |F_{x\bullet}|$, Eq. \eqref{eq:Lateral_Force} \ is approximately identical to Eq.~\eqref{eq:fymax}. When $|\mu_{\bullet} F_{z\bullet}| < |F_{x\bullet}|$, $F_{y\bullet_{\max}} \approx 0$.

A sigmoid function is used to relate the lateral tire force $F_{y\bullet}$ and the slip angle $\alpha_\bullet$ as depicted in Fig.~\ref{fig:KS}-B2
The analytical expression is shown below:
\begin{align}
    F_{y\bullet} &= -2 F_{y\bullet_{\max}} \left(  \left(1 + e^{\frac{- 2 C_{a\bullet} \alpha_\bullet }{F_{y\bullet_{\max}}} }\right)^{-1}   -0.5 \right)
\end{align}
where $C_{a\bullet}$ denotes the tire cornering stiffness. 
The sigmoid function captures the key considerations of the Fiala tire model used in \cite{dallas2023adaptive}. 
Namely, the slope around the linear region ($\alpha_\bullet \ll 1$ ) is $C_{a\bullet}$ and the magnitude is bounded by $F_{y\bullet_{\max}}$. 
With this formulation, the vehicle dynamics achieves $C^2$ continuity, which the optimizer favors, thus significantly improving computational speed in optimization.

\subsection{Spatial Envelope Model Predictive Control (Spatial Envelope MPC)}\label{sec:env_mpc}
The nonlinear Model Predictive Control (MPC) framework is formulated such that the controller is capable of optimizing and tracking the trajectories in real time without any predefined trajectories. 
It directly outputs the control commands to the vehicle. 

The optimal control problem (OCP) solved within the MPC over a receding horizon is given as follows. 
\begin{align}
\label{eq:cost} \mathop{\mathrm{minimize}}\limits_{\xi,\ \zeta}\quad \quad &J =\int_{t_0}^{t_f}\mathcal{I}[\xi(t),\zeta(t)]\, dt + \mathcal{G}[\xi(t_f)]\\
\label{eq:Dynamic}\text{subject to} \quad \quad &\dot{\xi}(t) = \mathcal{V}[\xi(t),\zeta(t)]\\
\label{eq:state}& \xi_\text{min}\leq \xi(t) \leq \xi_\text{max}\\
\label{eq:ctr}& \zeta_\text{min}\leq \zeta(t) \leq \zeta_\text{max}\\
\label{eq:tire_force_cons}& \mathbf{\mathcal{T_F}}[\xi(t)]\leq \mathbf{0} \\
\label{eq:envelope_constrraint} & \mathbf{\mathcal{E}}[\xi(t)] \leq \mathbf{0}
\end{align}
$J$ represents the cost function, $\mathcal{I}$ denotes the stage cost for each point along the predicted trajectory, and $\mathcal{G}$ signifies the cost-to-go term at the final trajectory point.
The Optimal Control Problem (OCP) is constrained to adhere to the vehicle dynamics $\mathcal{V}$, while the states $\xi$ and controls $\zeta$ are bounded as specified in Eq.~\eqref{eq:state} and ~\eqref{eq:ctr}. 
$\mathcal{T_F}$ represents the constraints on the tire forces.	
$\mathcal{E}$ imposes spatial constraints to ensure the vehicle remains within the safe envelope.	
All these terms are detailed next.

\subsubsection{Hard Constraints}: Three types of hard constraints are considered in the MPC. 
Namely, linear state and control constraints, tire force constraints, and spatial envelope constraints.

\textbf{Linear State and Control Constraints}: Three states are constrained with constant lower and upper bounds to ensure the safe operation of the vehicle:
\begin{align}
\label{eq:v_bound} v_\mathrm{min}\, \leq \, \, \, & v \leq \, v_\mathrm{max}\\
\label{eq:r_bound} r_\mathrm{min}\, \leq \, \, \, & r \leq \, r_\mathrm{max}\\
\label{eq:sa_bound} \delta_{f_\mathrm{min}}\, \leq \, \, \, & \delta_f \leq \, \delta_{f_\mathrm{max}}
\end{align}
Another constraint is applied on the acceleration state to capture the fact that as the speed increases, the maximum longitudinal acceleration the vehicle can achieve becomes smaller due to the engine power limits. 
The typical behavior of a straight-line acceleration test is illustrated in Fig.~\ref{fig:KS}-C1. 
The shape is close to a saturating exponential function, characterized by the saturating value $p_b$ and time constant $\frac{1}{p_a}$. 
Thus, the vehicle's acceleration capability is approximated as a linear constraint on $a_x$ as a function of $u_x$ as expressed below and depicted in Fig.~\ref{fig:KS}-C2:
\begin{equation}
    a_x \leq - p_a (u_x - p_b)
\end{equation}

The control bounds are set to capture the actuator limits:
\begin{align}
\label{eq:delta_f_limits}\dot{\delta}_{f_{\mathrm{min}}}\, \leq \, \, \, & \dot{\delta}_f \leq \, \dot{\delta}_{f_{\mathrm{max}}}\\
\label{eq:jx_limits}j_{x_\mathrm{min}}\, \leq \, \, \, & j_x \leq \, j_{x_\mathrm{max}}
\end{align}

\textbf{Tire Force Constraint}: As shown in Sec. \ref{sec:VD}, the friction circle is naturally embedded in Eq.~\eqref{eq:fymax} and is further approximated in Eq.~\eqref{eq:Lateral_Force}. 
Therefore, the original friction circle constraint
\begin{align}
    \label{eq:original_friction_ellipse} F_{x\bullet}^2 + F_{y\bullet}^2 &\leq \left(\mu_\bullet F_{z\bullet}\right)^2
\end{align}
can be now turned into
\begin{align}
    \label{eq:new_friction_ellipse} |F_{x\bullet}|& \leq  \mu_\bullet F_{z\bullet}
\end{align}
Eq. \eqref{eq:new_friction_ellipse} is expanded based on whether the vehicle is accelerating or braking based on Eq.~\eqref{eq:acc_longitudinal}, \eqref{eq:br_longitudinal}, \eqref{eq:front_load} and \eqref{eq:rear_load}. 
Specifically, for the acceleration case, we have: 
\begin{equation}
        \label{eq:front_limit}
        \begin{aligned}
        0 &\leq \mu_f F_{zf}\\
        M a_x &\leq \mu_r \left(\frac{L_f}{L_f + L_r} Mg + K_{z} a_x \right)
        \end{aligned}
\end{equation}
From this equation, an upper bound for $a_x$ is derived:
\begin{equation}\label{eq:acc_upper_limit}
    a_x \leq \min\left(\frac{L_r M g}{\left(L_f + L_r\right)K_z}, \frac{\mu_r  Mg  \frac{L_f}{L_f + L_r} }{M - \mu_r  K_z} \right)
\end{equation}
Similarly, for the braking case we have: 
\begin{equation}
        \label{eq:rear_limit}\begin{aligned}
        -b_r M a_x &\leq \mu_f \left(\frac{L_r}{L_f + L_r} Mg  - K_{z} a_x \right)\\
        -(1 - b_r) M  a_x &\leq \mu_r \left(\frac{L_f}{L_f + L_r} Mg + K_{z} a_x \right)
    \end{aligned}
\end{equation}
This leads to the following lower bound for $a_x$: 
\begin{equation}\label{eq:acc_lower_limit}
    a_x  \geq \max\left(\frac{-\mu_r  Mg  \frac{L_f}{L_f + L_r}}{M (1 - b_r) + \mu_r  K_z}, \frac{-\mu_f  Mg  \frac{L_r}{L_f + L_r}  }{M  b_r - \mu_f  K_z} \right)
\end{equation}

Thus, the original nonlinear friction circle constraint Eq. \eqref{eq:original_friction_ellipse} is reduced to a set of linear constraints on longitudinal acceleration $a_x$, greatly improving the computational efficiency of the MPC.

\begin{figure*}
\centering
\ifthenelse{\boolean{PngBool}}{%
  \includegraphics[width=0.75\textwidth]{PngVer/Figure1Methods.png}%
}{%
  \includegraphics[width=0.75\textwidth]{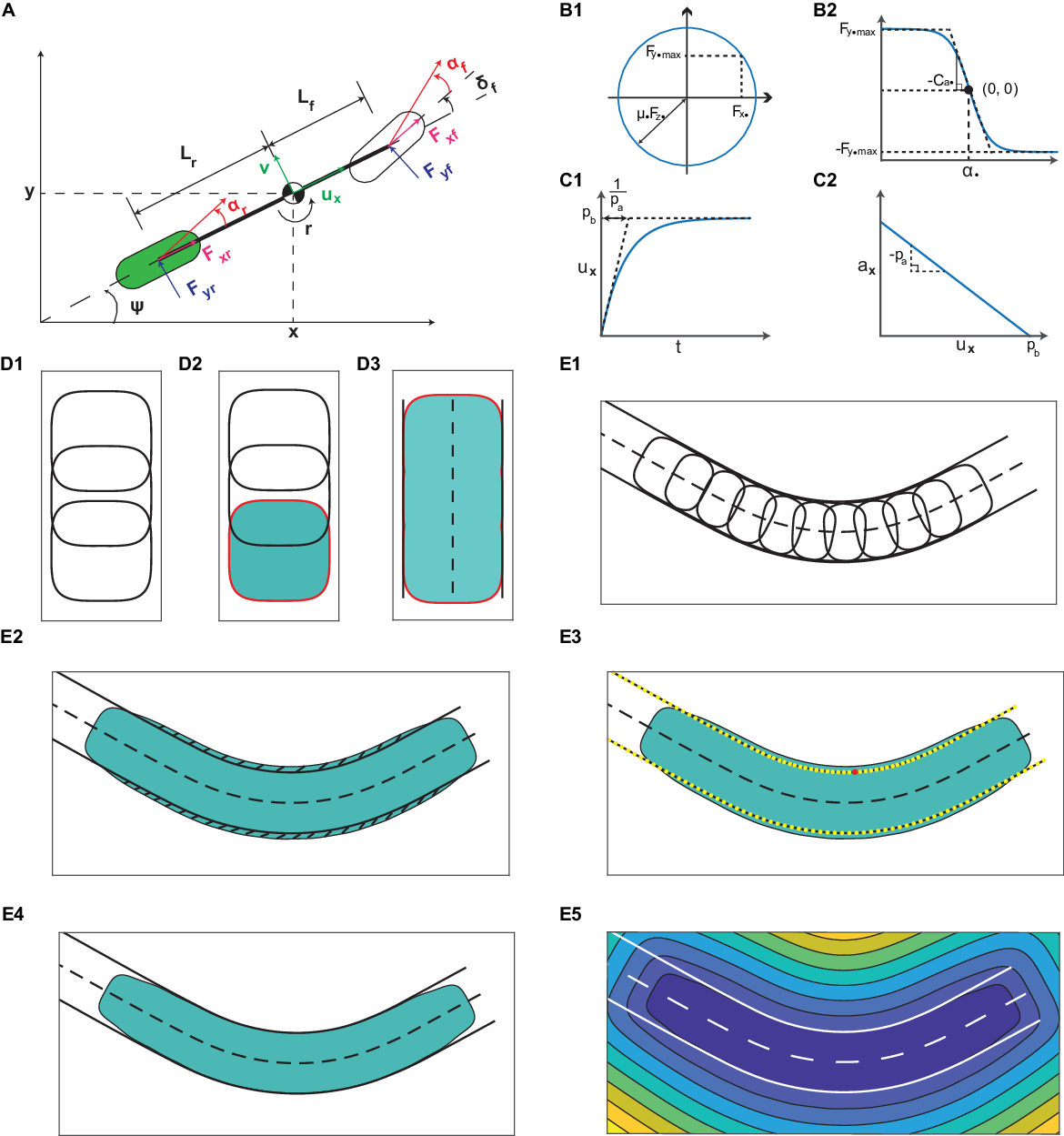}%
}
\caption{The plots for vehicle dynamics, tire forces, acceleration constraints, spatial envelope hard constraints, and spatial envelope cost (\textbf{A}) Free-body diagram of the three-DoF dynamical single-track model. The green tire means the vehicle is rear-wheel driven. (\textbf{B1}) The friction circle limit. (\textbf{B2}) The sigmoid representation of the lateral tire force. (\textbf{C1}) Longitudinal speed versus time in a straight acceleration test. (\textbf{C2}) Maximum acceleration deduced from the straight acceleration test. (\textbf{D1}) Three block constraints stacked together. (\textbf{D2}) The feasible set for one block constraint. (\textbf{D3}) The feasible set when any one of the block constraints is satisfied. The shape of the teal region forms a lane. (\textbf{E1}) A portion of the race track. The track is segmented by ten blocks. (\textbf{E2}) A contour plot of KS aggregation over the blocks, where the teal region is the feasible set. The hatched area is the critical set $\Omega_c$. (\textbf{E3}) The yellow dots are the points on lane boundaries and the red point has the minimal LogSumExp value. (\textbf{E4}) The feasible set given by the conservative spatial envelope hard constraint. (\textbf{E5}) A contour plot of the spatial envelope cost.} 
\label{fig:KS}
\end{figure*}

\textbf{Spatial Envelope Constraints}: 
The spatial envelope is defined as the region through which the vehicle can traverse safely. 
In general, the envelope can take any shape.
Accordingly, no assumption is made about the envelope shape in this study.
Thus, methods like using a cubic function to fit the lane boundary in \cite{nuthong2010lane}  are not suitable, because they require the global X and Y positions to have a one-to-one relationship. 

In this work, the need to account for an arbitrary envelope shape is addressed by segmenting it into a sequence of blocks.
Fig.~\ref{fig:KS}-D1 illustrates an example, where a track segment is divided into three blocks. 
The constraint for the vehicle to stay within one of the blocks is defined as follows:

\begin{align}
    g^j_{b}(\xi) &= d^j_{b}(\xi) - 1\leq 0 \label{eq:nomal_block} \\
    \Delta x_b^j &= x - x_b^j \\
    \Delta y_b^j &= y - y_b^j
\end{align}
\begin{align}
\label{eq:pnorm}
\begin{split}
    d^j_{b}(\xi) &= \left( \left( \frac{\cos(\psi^j_{b}) \Delta x_b^j + \sin(\psi^j_{b}) \Delta y_b^j}{L^j_{b}} \right)^p \right. \\
                 &\quad \left. + \left( \frac{\cos(\psi^j_{b}) \Delta y_b^j - \sin(\psi^j_{b}) \Delta x_b^j}{W^j_{b}} \right)^p \right)^{\frac{1}{p}}
\end{split}
\end{align}
Here, $x$ and $y$ represent the x and y positions of the vehicle's center of gravity (C.G.) and are the entries of $\xi$. 
The variables $x_{b}^j$, $y_{b}^j, \psi_{b}^j, L^j_{b}$, and $W^j_{b}$ correspond to the block center's x-y positions, block yaw angle, half block length, and half block width, respectively. 
The superscript $j$ represents the $j^{th}$ block.
The parameter $p$ defines the p-norm, which is set to 4 to make hyperellipses. 
A higher value of $p$ can make the block more closely resemble a rectangle, but it increases the chance of causing ill-conditioning in the optimization. 
The satisfaction of Eq.~\eqref{eq:nomal_block} indicates that the vehicle is inside the specific block, shown as the teal region in Fig.~\ref{fig:KS}-D2. 
If the vehicle is inside any of these blocks, the condition is equivalent to constraining it inside the entire envelope, shown as the teal region in Fig.~\ref{fig:KS}-D3. 
The expression for the \textbf{OR} operation over multiple block constraints is discussed below.


\begin{lemma}
\label{lemma:or_condition}
Let there be $n$ conditions in the form of $g^j(\xi) \leq 0, j=1,2,...,n$.
Then, the OR operation, where at least one of the conditions $g^j(\xi)$ is satisfied, can be expressed as a single inequality constraint as follows:
\begin{equation}
    g_\text{min}(\xi) = \min\{g^1(\xi), g^2(\xi), ... g^{n}(\xi)  \} \leq 0 \label{eq:lemma_or_constraint}
\end{equation}
\end{lemma}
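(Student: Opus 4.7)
The plan is to prove the equivalence by a short double-inclusion argument on the underlying feasible sets, since the claim is essentially a restatement of the definition of the minimum operator. I will treat ``the OR of the inequalities holds'' as the set $S_\vee = \{\xi : \exists\, j \in \{1,\dots,n\},\ g^j(\xi) \le 0\}$ and ``the single inequality holds'' as $S_{\min} = \{\xi : g_\text{min}(\xi) \le 0\}$, and show $S_\vee = S_{\min}$.

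First I would do the forward direction ($S_\vee \subseteq S_{\min}$). Fix any $\xi \in S_\vee$ and pick an index $k$ for which $g^k(\xi) \le 0$. By definition of the minimum over a finite set, $g_\text{min}(\xi) \le g^k(\xi) \le 0$, so $\xi \in S_{\min}$. Next I would handle the reverse direction ($S_{\min} \subseteq S_\vee$). Fix any $\xi \in S_{\min}$. Because $\{g^1(\xi),\dots,g^n(\xi)\}$ is a finite set of real numbers, the minimum is attained, so there exists $k \in \{1,\dots,n\}$ with $g^k(\xi) = g_\text{min}(\xi) \le 0$, and hence $\xi \in S_\vee$.

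There is really no main obstacle: the lemma is a direct consequence of the definition of the minimum of a finite set of reals and requires no vehicle-dynamics or envelope-specific reasoning. The only remark worth appending is a practical one for the reader: although $g_\text{min}$ collapses the $n$-way disjunction into a single inequality, it is nonsmooth at points where two or more $g^j$ coincide, so the MPC implementation will need a smooth surrogate (e.g.\ the LogSumExp / KS aggregation hinted at in Fig.~\ref{fig:KS}-E2) rather than $g_\text{min}$ itself. This caveat does not affect the correctness of the lemma as stated.
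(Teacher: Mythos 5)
Your proof is correct and matches the paper's argument in substance: the forward inclusion uses $g_\text{min}(\xi) \le g^k(\xi) \le 0$, and the reverse inclusion uses attainment of the minimum over a finite set (the paper merely phrases this direction as a contradiction rather than directly). No gaps; the added remark on nonsmoothness is consistent with the paper's subsequent use of the LogSumExp surrogate.
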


\begin{proof}
Suppose that at least one of the conditions is satisfied. 
Then, $\exists j^* \in \left[1, n\right] \text{, } g^{j^*}(\xi) \leq 0$. 
Since $g_\text{min}(\xi)$ represents the minimum value among the expressions $g^j(\xi)$, it follows that $g_\text{min}(\xi) \leq g^{j^*}(\xi) \leq 0$. \\
Conversely, assume that Eq.~\eqref{eq:lemma_or_constraint} is satisfied, but none of the conditions $g^j(\xi)$ are satisfied.
This leads to a contradiction, since $g_\text{min}(\xi) \le 0$ implies that 
$\exists j^* \in \left[1, n\right] \text{ such that } g^{j^*}(\xi) = g_\text{min}(\xi) \leq 0$. 
\end{proof}

Using Lemma~\ref{lemma:or_condition}, the expression for the vehicle staying inside the envelope is given by:
\begin{equation}
    g_\text{min, b}(\xi) = \min\{g^1_{b}(\xi), g^2_{b}(\xi), ..., g^{n_{\text{block}}}_{b}(\xi)  \} \leq 0 \label{eq:or_constraint}
\end{equation}
where $n_{\text{block}}$ is the total number of blocks. 
Note that Eq.~\eqref{eq:or_constraint} is not differentiable due to the minimization operation on the functions. 
Therefore, a smooth minimization technique is required to closely approximate $g_\text{min, b}(\xi)$ while ensuring it is twice continuously differentiable. 

One candidate is the Kreisselmeier–Steinhauser (KS) aggregation \cite{kreisselmeier1980systematic} used in constrained optimization, which is also referred to as LogSumExp (LSE) \cite{nielsen2016guaranteed} in machine learning. 
In the context of constrained optimization, KS aggregation is used to conservatively estimate the maximum. 
In the context of machine learning, the LSE function is used for both  maximum and minimum approximations. 
However, the conservativeness property is not guaranteed for the minimum approximation. 
Thus, an expression for the spatial envelope constraint with conservativeness guarantee is derived next.

\begin{lemma}
\label{lemma:conservative}
    Given two sets $\Omega^i = \{ \xi |~ g^{i}(\xi) \leq 0 \}$ and $\Omega^j = \{  \xi |~ g^{j}(\xi) \leq 0 \}$, if $g^i(\xi) \leq g^j(\xi) ~\forall \xi$, then $\Omega^j \subseteq \Omega^i$.
\end{lemma}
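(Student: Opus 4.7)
The plan is to give a direct, element-chasing proof of set inclusion, using the definitions of the two sublevel sets together with the pointwise inequality hypothesis. The statement is essentially the standard fact that sublevel sets are monotone in the function, reversed: a pointwise smaller function has a larger zero-sublevel set.

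The steps I would carry out in order are as follows. First, I would fix an arbitrary $\xi \in \Omega^j$; to show $\Omega^j \subseteq \Omega^i$ it suffices to prove that this $\xi$ belongs to $\Omega^i$. Second, I would unpack the definition of membership in $\Omega^j$, which gives the inequality $g^j(\xi) \leq 0$. Third, I would invoke the hypothesis $g^i(\xi) \leq g^j(\xi)$, valid at this particular $\xi$ (since it holds for all $\xi$). Fourth, chaining the two inequalities yields $g^i(\xi) \leq g^j(\xi) \leq 0$, so $g^i(\xi) \leq 0$. Fifth, by the definition of $\Omega^i$ this means $\xi \in \Omega^i$. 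Since $\xi$ was arbitrary, $\Omega^j \subseteq \Omega^i$, as required.

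There is no real obstacle here: the lemma is a one-line consequence of transitivity of $\leq$ applied at each point of $\Omega^j$. The only thing to be careful about is notational, namely to avoid conflating the pointwise inequality on the functions (hypothesis) with the set inclusion (conclusion) and to state the direction correctly, i.e.\ the \emph{smaller} function produces the \emph{larger} sublevel set. This is precisely what makes KS aggregation conservative when it is an upper bound on $g_{\min, b}$: it shrinks the admissible set and thereby tightens the envelope constraint, which is the property needed in Sec.~\ref{sec:env_mpc}.
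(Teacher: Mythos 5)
Your proof is correct and is essentially identical to the paper's: both fix an arbitrary $\xi \in \Omega^j$, use $g^j(\xi) \leq 0$ together with the pointwise hypothesis $g^i(\xi) \leq g^j(\xi)$ to conclude $g^i(\xi) \leq 0$, and hence $\xi \in \Omega^i$. No differences worth noting.
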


\begin{proof}
  By definition,  $\forall \xi \in \Omega^j$, $g^j(\xi) \leq 0$. Given that $g^i(\xi) \leq g^j(\xi)~\forall \xi$, it follows that $g^i(\xi) \leq 0 ~ \forall \xi \in \Omega^j$. Therefore, if $\xi \in \Omega^j$, then $\xi \in \Omega^i$. This implies $\Omega^j \subseteq \Omega^i$.
\end{proof}

\begin{lemma}
\label{lemma:LES_max_property}
   Given a set of functions $G = \left\{  g^j(\xi) \text{, } j \in \left[1, n \right] \right\}$, consider the function $g_\text{max}(\xi)$ defined as follows:
    \begin{equation}
        g_\text{max}(\xi) = \max\{g^1(\xi), g^2(\xi), ... ,g^{n}(\xi)  \} \leq 0 \label{eq:max_constraint}
    \end{equation}
The LSE over the set $G$ with positive $\rho_\text{LSE}$ satisfies the following properties:
    \begin{equation}
    g_\text{max}(\xi) \leq g_\text{LSE}(\xi)  \leq g_\text{max}(\xi) + \frac{\ln (n)}{\rho_\text{LSE}} \label{eq:lemma_max_lse_Inequality}
\end{equation}
where $g_\text{LSE}(\xi)$ is the expression of LSE, and the first inequality is strict unless $n = 1$.

\begin{equation}
   g_\text{LSE}(\xi) = \frac{1}{\rho_\text{LSE}} \ln \left( \sum_{j=1}^{n} e^{\rho_\text{LSE} \; g^j(\xi)} \right), \forall \rho_\text{LSE} >0 \label{eq:lemma_max_les_agg}
\end{equation}
and $\rho_\text{LSE}$ is the tuning parameter. 
\end{lemma}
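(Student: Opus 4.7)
The plan is to establish the two inequalities in Eq.~\eqref{eq:lemma_max_lse_Inequality} independently by exploiting the monotonicity of $\ln(\cdot)$ and the positivity of every exponential summand. Both bounds rely on simple envelope estimates of the sum $\sum_{j=1}^{n} e^{\rho_\text{LSE}\,g^j(\xi)}$: from below by its largest term, and from above by $n$ times its largest term.

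\emph{Lower bound.} First, I would fix $\xi$ and let $j^\star \in \arg\max_{j} g^j(\xi)$, so that $g^{j^\star}(\xi) = g_\text{max}(\xi)$. Since $\rho_\text{LSE}>0$ and every exponential is strictly positive, dropping all indices other than $j^\star$ yields
\begin{equation}
e^{\rho_\text{LSE}\,g_\text{max}(\xi)} \;\le\; \sum_{j=1}^{n} e^{\rho_\text{LSE}\,g^j(\xi)},
\end{equation}
with equality iff $n=1$. Applying $\tfrac{1}{\rho_\text{LSE}}\ln(\cdot)$, which is strictly monotone increasing, gives $g_\text{max}(\xi) \le g_\text{LSE}(\xi)$, with strict inequality whenever $n>1$ (because then at least one additional positive term is added to the sum on the right).

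\emph{Upper bound.} For the other side, I would bound each summand by the maximum: for every $j$, $e^{\rho_\text{LSE}\,g^j(\xi)} \le e^{\rho_\text{LSE}\,g_\text{max}(\xi)}$. Summing over $j=1,\dots,n$,
\begin{equation}
\sum_{j=1}^{n} e^{\rho_\text{LSE}\,g^j(\xi)} \;\le\; n\, e^{\rho_\text{LSE}\,g_\text{max}(\xi)}.
\end{equation}
Taking $\tfrac{1}{\rho_\text{LSE}}\ln(\cdot)$ of both sides and using $\ln(ab)=\ln a + \ln b$ directly produces $g_\text{LSE}(\xi)\le g_\text{max}(\xi) + \tfrac{\ln n}{\rho_\text{LSE}}$.

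\emph{Expected obstacles.} There is essentially no hard step here; the argument is a textbook smooth-max approximation bound, and the only care needed is (i) to track the strict-vs-nonstrict distinction in the lower bound, which follows from the positivity of the omitted exponential terms, and (ii) to ensure that $\rho_\text{LSE}>0$ is used when dividing by $\rho_\text{LSE}$ so that the direction of the inequality is preserved after applying the logarithm. No assumption on the functions $g^j$ beyond being well-defined at $\xi$ is required, so the bound holds pointwise for every $\xi$ in the common domain, which is exactly what is needed for the subsequent construction of a conservative differentiable surrogate of the envelope constraint via Lemma~\ref{lemma:conservative}.
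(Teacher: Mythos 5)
Your proof is correct and follows essentially the same route as the paper's: both arguments bound the sum $\sum_{j} e^{\rho_\text{LSE} g^j(\xi)}$ between its largest term and $n$ times that term, the only cosmetic difference being that the paper first factors out $e^{\rho_\text{LSE} g_\text{max}(\xi)}$ before taking the logarithm while you apply the logarithm to the bounds directly. Your handling of the strictness of the lower bound for $n>1$ and of the sign of $\rho_\text{LSE}$ matches the paper's reasoning.
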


\begin{proof}
Define \( M = g_{\max}(\xi) \) and rewrite the sum inside the logarithm in \eqref{eq:lemma_max_les_agg}:
\begin{align}
g_\text{LSE}(\xi) 
&= \frac{1}{\rho_\text{LSE}} \ln \left( \sum_{j=1}^{n} e^{\rho_\text{LSE} \; g^j(\xi)} \right) \notag \\
&= \frac{1}{\rho_\text{LSE}} \ln \left( \sum_{j=1}^{n} \left( e^{\rho_\text{LSE} (g^j(\xi) - M)} e^{\rho_\text{LSE} M} \right ) \right) \notag \\
&= \frac{1}{\rho_\text{LSE}} \left[ \ln \left( \sum_{j=1}^{n} e^{\rho_\text{LSE}(g^j(\xi) - M)} \right) + \rho_\text{LSE} M \right] \notag \\
&= M + \frac{1}{\rho_\text{LSE}} \ln \left( \sum_{j=1}^{n} e^{\rho_\text{LSE}(g^j(\xi) - M)} \right)
\end{align}
Since \( g^j(\xi) - M \leq 0 \) for all \( j \), and there exists at least one \( j \) such that \( g^j(\xi) = M \), we have:
\begin{itemize}
    \item \( e^{\rho_\text{LSE}(g^j(\xi) - M)} \in (0, 1] \)
    \item \( \sum_{j=1}^{n} e^{\rho_\text{LSE}(g^j(\xi) - M)} \in [1, n] \)
\end{itemize}
Thus,
\begin{equation}
\label{eq:log_sum_exp_proof}
\ln \left( \sum_{j=1}^{n} e^{\rho_\text{LSE}(g^j(\xi) - M)} \right) \in [0, \ln n]
\end{equation}
Note that Eq.~\eqref{eq:log_sum_exp_proof} becomes 0 only when $n = 1$. Based on Eq.~\eqref{eq:log_sum_exp_proof}, the following inequalities are derived.

\begin{equation}
g_{\max}(\xi) \leq g_\text{LSE}(\xi) \leq g_{\max}(\xi) + \frac{\ln n}{\rho_\text{LSE}}
\end{equation}
which proves inequality~\eqref{eq:lemma_max_lse_Inequality}. 
The closed-form expression for \( g_\text{LSE}(\xi) \) is given directly in~\eqref{eq:lemma_max_les_agg} by definition. 
\end{proof}


\begin{lemma}
\label{lemma:KS_property}
   Given the function set defined in Lemma~\ref{lemma:LES_max_property} and $g_\text{min}$ defined in Eq~\eqref{eq:lemma_or_constraint}, the LSE over the set $G$ with negative $\rho_\text{LSE}$ satisfies the following properties:
       \begin{align}
    g_\text{min}(\xi)  & + \frac{\ln(n)}{\rho_\text{LSE}} \leq {g}_\text{LSE}(\xi) \leq g_\text{min}(\xi) \label{eq:lemma_lse_property}\\
    g_\text{LSE}(\xi) & = \frac{1}{\rho_\text{LSE}} \ln \left( \sum_{j=1}^{n} e^{\rho_\text{LSE}(g^j(\xi))} \right)\ , ~ \forall \rho_\text{LSE} < 0 \label{eq:lemma_LSE}
    \end{align}

    where the second inequality in Eq.~\eqref{eq:lemma_lse_property} is strict unless $n = 1$.
\end{lemma}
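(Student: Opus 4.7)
The plan is to mirror the argument used for Lemma~\ref{lemma:LES_max_property}, but with careful tracking of the sign flip induced by $\rho_\text{LSE}<0$. Because the structural manipulation is identical (factoring out the extremal value inside the log), the main ingredients I would carry over are (i) pulling $m := g_\text{min}(\xi)$ out of the sum, (ii) bounding the residual sum of exponentials between $1$ and $n$, and (iii) dividing by $\rho_\text{LSE}$, where the negativity of $\rho_\text{LSE}$ reverses every inequality sign. The closed-form expression~\eqref{eq:lemma_LSE} is true by definition, so only the inequality~\eqref{eq:lemma_lse_property} requires work.

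First I would define $m = g_\text{min}(\xi) = \min_j g^j(\xi)$ and rewrite
\begin{equation*}
g_\text{LSE}(\xi) \;=\; m \;+\; \frac{1}{\rho_\text{LSE}} \ln\!\left( \sum_{j=1}^{n} e^{\rho_\text{LSE}(g^j(\xi)-m)} \right),
\end{equation*}
exactly as in the proof of Lemma~\ref{lemma:LES_max_property}. By the definition of the minimum, $g^j(\xi) - m \geq 0$ for all $j$, with equality achieved for at least one index $j^{\star}$. Since $\rho_\text{LSE} < 0$, multiplying by $\rho_\text{LSE}$ flips the sign, so each exponent satisfies $\rho_\text{LSE}(g^j(\xi)-m) \leq 0$, hence $e^{\rho_\text{LSE}(g^j(\xi)-m)} \in (0,1]$. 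Summing $n$ such terms, with at least one term equal to $1$, yields
\begin{equation*}
\sum_{j=1}^{n} e^{\rho_\text{LSE}(g^j(\xi)-m)} \in [1, n],
\qquad
\ln\!\left(\sum_{j=1}^{n} e^{\rho_\text{LSE}(g^j(\xi)-m)}\right) \in [0, \ln n].
\end{equation*}

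The key step, and the only place where the proof diverges from Lemma~\ref{lemma:LES_max_property}, is dividing the bracketed quantity by $\rho_\text{LSE}<0$: the inequality directions flip, giving
\begin{equation*}
\frac{\ln n}{\rho_\text{LSE}} \;\leq\; \frac{1}{\rho_\text{LSE}} \ln\!\left(\sum_{j=1}^{n} e^{\rho_\text{LSE}(g^j(\xi)-m)}\right) \;\leq\; 0.
\end{equation*}
Adding $m$ throughout recovers precisely~\eqref{eq:lemma_lse_property}. For strictness of the upper bound when $n>1$, I would note that the inner sum equals $1$ only if every term equals $1$ and there is exactly one term, i.e.\ only if $n=1$; otherwise the sum is strictly greater than $1$ (the index $j^\star$ contributes $1$ and the remaining $n-1$ positive terms add a strictly positive amount), so $\ln(\cdot)>0$ and dividing by $\rho_\text{LSE}<0$ gives a strictly negative contribution, yielding $g_\text{LSE}(\xi) < g_\text{min}(\xi)$.

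I do not anticipate a genuine obstacle: the calculation is a direct transcription of Lemma~\ref{lemma:LES_max_property}. The one place to be careful is the sign flip when dividing by a negative $\rho_\text{LSE}$, which is exactly what turns the "max-is-overestimated" statement of Lemma~\ref{lemma:LES_max_property} into the "min-is-underestimated" (and hence conservative for the OR-constraint) statement needed here. This conservativeness is what downstream allows the smooth aggregation to be used as a safe inner approximation of the envelope constraint~\eqref{eq:or_constraint}.
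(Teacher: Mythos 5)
Your proof is correct. The route differs slightly from the paper's: you re-run the factoring argument of Lemma~\ref{lemma:LES_max_property} directly on $g_\text{min}$, pulling $m=g_\text{min}(\xi)$ out of the logarithm and letting the division by a negative $\rho_\text{LSE}$ reverse the inequalities, whereas the paper instead \emph{reduces} the statement to Lemma~\ref{lemma:LES_max_property} by negating both the functions ($\tilde{g}^j=-g^j$) and the tuning parameter ($\tilde{\rho}_\text{LSE}=-\rho_\text{LSE}>0$), observing that $\tilde{g}_\text{max}=-g_\text{min}$ and $\tilde{g}_\text{LSE}=-g_\text{LSE}$, and then flipping the resulting inequalities. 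The mathematical content is identical; the paper's reduction is marginally shorter because Lemma~\ref{lemma:LES_max_property} is already proven, while your direct computation is self-contained and makes the sign-flip mechanism explicit. Your handling of the strictness claim (the residual sum exceeds $1$ whenever $n>1$ because all $n$ terms are strictly positive and at least one equals $1$) matches the paper's corresponding remark and is sound.
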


\begin{proof}
    Given the function set $G$ and a negative $\rho_\text{LSE}$, define the negated function set $\Tilde{G}$, and the negated tuning parameter $\Tilde{\rho}_\text{LSE}$:
    \begin{align}
        \Tilde{G} &= \left\{  \Tilde{g}^j(\xi) \text{, } j \in \left[1, n \right] \right\}\\
        \Tilde{g}^j(\xi) &= -g^j(\xi) \text{, } j \in \left[1, n \right] \label{eq:lemma:negate} \\
        \Tilde{\rho}_\text{LSE} &= - \rho_\text{LSE} > 0
    \end{align}
    
    Based on Lemma~\ref{lemma:LES_max_property}, the LSE over $\Tilde{G}$ obeys the following inequalities:
    \begin{align}
        \Tilde{g}_\text{max}(\xi)  \leq \Tilde{g}_\text{LSE}(\xi) & \leq \Tilde{g}_\text{max}(\xi) + \frac{\ln (n)}{\Tilde{\rho}_\text{LSE}}, \forall \Tilde{\rho}_\text{LSE} >0 \label{eq:lemma_ks_min_derive_property}\\
        \Tilde{g}_\text{LSE}(\xi) & = \frac{1}{\Tilde{\rho}_\text{LSE}} \ln \left( \sum_{j=1}^{n} e^{\Tilde{\rho}_\text{LSE}(\Tilde{g}^j(\xi))} \right) \label{eq:lemma_ks_min_derive_agg} \\
        \Tilde{g}_\text{max}(\xi) & = \max\{\Tilde{g}^1(\xi), \Tilde{g}^2(\xi), ... \Tilde{g}^{n}(\xi)  \} \label{eq:lemma_ks_min_derive_min_define}
    \end{align}
    
    Note that the first inequality in Eq.~\eqref{eq:lemma_ks_min_derive_property} is strict unless $n = 1$.
    Based on Eq.~\eqref{eq:lemma:negate}, the following relationship is derived:
    \begin{equation}
        \Tilde{g}_\text{max}(\xi) = -g_\text{min}(\xi) \label{eq:lemma_ks_min_derive_maxmin}
    \end{equation}
    Based on Eq.~\eqref{eq:lemma_ks_min_derive_maxmin} and Eq.~\eqref{eq:lemma_ks_min_derive_property}, the property of LSE over $G$ with negative $\rho_\text{LSE}$ is derived:
    \begin{equation}
        g_\text{min}(\xi) + \frac{\ln(n)}{\rho_\text{LSE}} \leq -\Tilde{g}_\text{LSE}(\xi) = {g}_\text{LSE}(\xi) \leq g_\text{min}(\xi) \label{eq:lemma_ks_min_derive_min_property}
    \end{equation}
    where the second inequality is strict unless $n = 1$.
    
\end{proof}

Similar proofs for Lemma~\ref{lemma:LES_max_property} and \ref{lemma:KS_property} have been presented in \cite{molnar2023composing} for control barrier function constraint aggregation. Nevertheless, these steps are retained here for completeness and consistent notation throughout the paper.




\begin{lemma}
\label{lemma:subset}
    Consider the safety set $\Omega_s$ defined as $\Omega_s = \{  \xi |~g_\text{min}(\xi) \leq 0 \}$; the feasible set $\Omega_0$ defined as $\Omega_0 = \{  \xi |~g_\text{LSE}(\xi) \leq 0 \}$; and the critical set  defined as $\Omega_c = \Omega_s^C \cap \Omega_0$.
    If $\Omega_c \neq \varnothing$, then $\Omega_0 \not\subseteq \Omega_s$. Otherwise, $\Omega_0 \subseteq \Omega_s$.
\end{lemma}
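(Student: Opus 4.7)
The statement reads as two conditional claims about the three sets $\Omega_s$, $\Omega_0$, and $\Omega_c = \Omega_s^C \cap \Omega_0$. The plan is to attack it as a pair of direct set-theoretic arguments, because the bounds on $g_\text{LSE}$ from Lemmas~\ref{lemma:LES_max_property} and \ref{lemma:KS_property} are not actually needed here: the claim follows purely from the definitions. I therefore expect no serious obstacle; the main care required is to keep the logical structure of the two cases cleanly separated and to correctly use the defining property $\xi \in \Omega_s \Longleftrightarrow g_\text{min}(\xi) \leq 0$, and likewise for $\Omega_0$.

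First I would handle the forward direction. Assume $\Omega_c \neq \varnothing$ and select a witness $\xi^* \in \Omega_c$. By the definition of $\Omega_c$, this $\xi^*$ satisfies $\xi^* \in \Omega_0$ (so $g_\text{LSE}(\xi^*) \leq 0$) and simultaneously $\xi^* \in \Omega_s^C$ (so $g_\text{min}(\xi^*) > 0$). This single element already certifies that $\Omega_0$ contains a point outside $\Omega_s$, which is exactly the statement $\Omega_0 \not\subseteq \Omega_s$. This half is essentially a one-line unpacking of the definition of the intersection.

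Next I would handle the reverse direction by contrapositive (or equivalently, by the standard set identity $A \cap B^C = \varnothing \Longleftrightarrow A \subseteq B$). Suppose $\Omega_c = \varnothing$, i.e.\ $\Omega_s^C \cap \Omega_0 = \varnothing$, and pick an arbitrary $\xi \in \Omega_0$. If $\xi$ were not in $\Omega_s$, then $\xi \in \Omega_s^C \cap \Omega_0 = \Omega_c$, contradicting $\Omega_c = \varnothing$. Therefore $\xi \in \Omega_s$, and since $\xi$ was arbitrary, $\Omega_0 \subseteq \Omega_s$. Combining the two cases exhausts all possibilities for $\Omega_c$ and completes the proof.

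Since the argument is essentially a restatement of definitions, the hardest part is really editorial rather than mathematical: making clear in the prose that the lemma serves as the formal criterion for deciding whether the smooth envelope constraint $g_\text{LSE}(\xi) \leq 0$ is conservative with respect to the original disjunctive constraint $g_\text{min}(\xi) \leq 0$. This framing connects the lemma to its intended use later in the paper, where emptiness of $\Omega_c$ is what licenses substituting $g_\text{LSE}$ for $g_\text{min}$ in the MPC without sacrificing safety.
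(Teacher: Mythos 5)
Your proposal is correct and follows essentially the same argument as the paper: the first case is the same one-line witness extraction from $\Omega_c$, and the second case is the same set-theoretic observation that $\Omega_s^C \cap \Omega_0 = \varnothing$ forces $\Omega_0 \subseteq \Omega_s$. The paper's proof is just a more compressed version of what you wrote.
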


\begin{proof}
    If $\Omega_c \neq \varnothing$, then $\exists \xi \in \Omega_0$ such that $\xi \notin \Omega_s$, implying $\Omega_0 \not\subseteq \Omega_s$.\\
    If $\Omega_c = \varnothing$, then $\forall \xi \in \Omega_0$, $\xi \in \Omega_s$, implying $\Omega_0 \subseteq \Omega_s$.
\end{proof}

\begin{theorem}
\label{theorem:con_LSE}
Let $\Omega_s$, $\Omega_0$, and $\Omega_c$ be defined as in Lemma \ref{lemma:subset}. Let $\Tilde{\Omega}_0$ be defined as
\begin{align}
    \Tilde{\Omega}_0 &= \{ \xi |~ g_\text{LSE}(\xi) - \epsilon_0 < 0 \} \\
    \epsilon_0 &= \underset{\xi \in \Omega_c}{\min}g_\text{LSE}(\xi) \label{eq:epsilon}
\end{align}
Then, if $\Omega_c \neq \varnothing$, then $\Tilde{\Omega}_0 \subseteq \Omega_s$.
\end{theorem}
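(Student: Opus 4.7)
The plan is to prove Theorem~\ref{theorem:con_LSE} by contradiction: take an arbitrary $\xi \in \tilde{\Omega}_0$ and suppose $\xi \notin \Omega_s$; then show this forces $\xi$ into $\Omega_c$, which contradicts the definition of $\epsilon_0$ as the minimum of $g_\text{LSE}$ over $\Omega_c$.

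First I would establish the preliminary observation that $\epsilon_0 \leq 0$. By Lemma~\ref{lemma:subset}, $\Omega_c = \Omega_s^C \cap \Omega_0 \subseteq \Omega_0$, so every $\xi \in \Omega_c$ satisfies $g_\text{LSE}(\xi) \leq 0$. Taking the minimum over the (nonempty) set $\Omega_c$ yields $\epsilon_0 \leq 0$, and in particular the minimum is attained (or at least the infimum $\leq 0$; assuming closedness of $\Omega_c$ or reinterpreting $\min$ as $\inf$ with strict inequality in $\tilde{\Omega}_0$ handles the technicality cleanly).

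Next, pick any $\xi \in \tilde{\Omega}_0$. By definition, $g_\text{LSE}(\xi) < \epsilon_0 \leq 0$, so $\xi \in \Omega_0$. Suppose for contradiction that $\xi \notin \Omega_s$. Then $\xi \in \Omega_s^C \cap \Omega_0 = \Omega_c$. But this membership in $\Omega_c$ implies, by the definition of $\epsilon_0$ in Eq.~\eqref{eq:epsilon}, that $g_\text{LSE}(\xi) \geq \epsilon_0$, directly contradicting the strict inequality $g_\text{LSE}(\xi) < \epsilon_0$ from $\xi \in \tilde{\Omega}_0$. Hence $\xi \in \Omega_s$, and since $\xi$ was arbitrary, $\tilde{\Omega}_0 \subseteq \Omega_s$.

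The main obstacle I anticipate is only a minor technical one: ensuring that the $\min$ in Eq.~\eqref{eq:epsilon} is well defined (i.e., attained rather than just an infimum). This is precisely why the definition of $\tilde{\Omega}_0$ uses a strict inequality $g_\text{LSE}(\xi) - \epsilon_0 < 0$ instead of a non-strict one. I would briefly note this design choice: the strict inequality makes the contradiction step robust regardless of whether $\epsilon_0$ is an attained minimum or just an infimum, so no additional continuity or compactness assumption on $\Omega_c$ is strictly needed. The rest of the argument is purely set-theoretic and follows cleanly from Lemma~\ref{lemma:subset} together with the definition of $\epsilon_0$.
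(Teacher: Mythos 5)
Your proof is correct and follows essentially the same route as the paper's: both hinge on the observations that $\epsilon_0 \leq 0$ and that any point of $\Tilde{\Omega}_0$ lying outside $\Omega_s$ would belong to $\Omega_c$, where $g_\text{LSE} \geq \epsilon_0$ contradicts the strict inequality $g_\text{LSE} < \epsilon_0$. The paper merely packages the pointwise contradiction as the emptiness of an auxiliary set $\Tilde{\Omega}_c = \Omega_s^C \cap \Tilde{\Omega}_0$ and then invokes Lemma~\ref{lemma:subset}, while your version argues directly on an arbitrary element; your added remark on $\min$ versus $\inf$ is a reasonable clarification but not a substantive difference.
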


\begin{proof}
    By the definitions of $\Omega_c$ and $\Omega_0$, $\forall \xi \in \Omega_c$, $\xi \in \Omega_0$, and thus $\forall \xi \in \Omega_c ,~ g_\text{LSE}(\xi) \leq 0$. 
    This implies that $\epsilon_0 \leq 0$ in Eq.~\eqref{eq:epsilon}. 
    Based on Lemma~\ref{lemma:conservative}, $\Tilde{\Omega}_0 \subseteq \Omega_0$ because $g_\text{LSE}(\xi) \leq g_\text{LSE}(\xi) - \epsilon_0$. 
    Define a new critical set as $\Tilde{\Omega}_c = \Omega_s^C \cap \Tilde{\Omega}_0$. 
    The new set satisfies $\Tilde{\Omega}_c \subseteq \Omega_c$ because $\Tilde{\Omega}_0 \subseteq \Omega_0$. 
    According to the definition of $\epsilon_0$ in Eq.~\eqref{eq:epsilon}, $\epsilon_0$ is the minimal LSE value in the set $\Omega_c$, implying $g_\text{LSE}(\xi) \geq \epsilon_0 ~ \forall \xi \in {\Omega}_c$. 
    Since $\Tilde{\Omega}_c \subseteq \Omega_c$, it follows that $\forall \xi \in \Tilde{\Omega}_c$, $\xi \in {\Omega}_c$, and thus $g_\text{LSE}(\xi) \geq \epsilon_0 ~ \forall \xi \in \Tilde{\Omega}_c$. 
    Since $(g_\text{LSE}(\xi) \geq \epsilon_0) \cap (g_\text{LSE}(\xi) < \epsilon_0) = \varnothing$, it follows that $\Tilde{\Omega}_c \cap \Tilde{\Omega}_0 = \varnothing$. 
    If $\Tilde{\Omega}_0 = \varnothing$, then $\Tilde{\Omega}_0 \subseteq \Omega_s$. 
    Otherwise, $\Tilde{\Omega}_c = \varnothing$, and according to Lemma~\ref{lemma:subset}, $\Tilde{\Omega}_0 \subseteq \Omega_s$.
\end{proof}

Theorem~\ref{theorem:con_LSE} proves that subtracting $\epsilon_0$ ensures the new feasible set is strictly contained within the safety set, which is essential for maintaining safety. 
The value $\epsilon_0$ is defined as the minimum of the LSE function over the critical set. 
The critical set is defined as the region outside the safety set but within the original feasible set specified by the LSE function.

A portion of a racetrack is shown in Fig.~\ref{fig:KS}-E1 through E4  to demonstrate the conservativeness of the spatial envelope constraint. 
In Fig.~\ref{fig:KS}-E1, the blocks are used to segment the envelope. 
Each block overlaps with at least one other block to ensure connectivity. 
Based on Lemma~\ref{lemma:KS_property}, the LSE is used to approximate the function $g_\text{min,b}(\xi)$ defined by Eq.~\eqref{eq:or_constraint}
as follows:
\begin{align}
    \label{eq:lse_block} {g}_\text{lse,b}(\xi) &= \frac{1}{\rho_\text{LSE}} \ln \left( \sum_{j=1}^{n_{\text{block}}} e^{\rho_\text{LSE}(g^j_{b}(\xi))} \right) , ~\forall \rho_\text{LSE} < 0
\end{align}

The feasible region defined by ${g}_\text{lse,b}(\xi) \leq 0$ is depicted as the teal region in Fig.~\ref{fig:KS}-E2. 
Although the blocks are within the lane boundaries, the feasible region described by Eq.~\eqref{eq:lse_block} extends beyond these boundaries. 
According to Theorem~\ref{theorem:con_LSE}, the critical set $\Omega_c$ must be identified first. 
The critical set lies outside the lane boundaries but within the feasible set, as illustrated by the hatched area in Fig.~\ref{fig:KS}-E2. 
Within $\Omega_c$, the minimal LSE value $\epsilon_0$ is identified, and the minimizer is marked with a red dot in Fig.~\ref{fig:KS}-E3. 
Note that in this work, in all scenarios considered, the turning of the lane is not sharp, and the lane width is large enough. 
By construction, the LSE attains its minima on the safety boundary (lane boundaries); hence the minimizer over the critical set lies on that boundary segment.
Therefore, in this work, only the lane boundary points are evaluated to find the minimal LSE value. 
These boundary points are depicted as yellow dots in Fig.~\ref{fig:KS}-E3. 
Then, the conservative spatial envelope constraint is expressed as follows:
\begin{align}
    \label{eq:lse_eq} {g}_{\text{envelope}}(\xi) &= {g}_\text{lse,b}(\xi) - \epsilon_0 < 0
\end{align}
The resultant feasible set is shown as the teal region in Fig.~\ref{fig:KS}-E4. 
The feasible region lies within the lane boundaries, thereby illustrating the conservativeness of the developed spatial envelope constraint.

During optimization, the states are treated as discrete rather than continuous design variables. 
Consequently, the $i^{th}$ prediction time is denoted as $\mathcal{T}(i)$, and the corresponding state is expressed as $\xi(\mathcal{T}(i))$.
Thus, the following conservative spatial envelope constraint is applied in the optimization.
\begin{align}
    \label{eq:envelope_cons} {g}_{\text{envelope}}(\mathcal{T}(i)) &= \frac{1}{\rho_\text{LSE}} \ln \left( \sum_{j=1}^{n_{\text{block}}} e^{\rho_\text{LSE}(g^j_{b}(\xi(\mathcal{T}(i))))} \right) - \epsilon_0 < 0
\end{align}
\subsubsection{Cost Function}\label{sec:COST}: The cost function in Eq.~\eqref{eq:cost} is further expanded as:
\begin{equation}
\label{eq:genericCostFunction}
    J = J_{\text{state}} + J_{\text{control}} + J_{\text{envelope}} + J_{\text{specific}}
\end{equation}
\textbf{State Cost ($J_{\text{state}})$} is the cost term for vehicle states:
\begin{equation}
    \label{eq:state_cost}
    J_{\text{state}} = \int^{t_f}_{t_0} \left(
    w_{\delta_f} {\delta_f}^2 + w_{a_x} a_x^2 + w_{v} v^2 + w_{\kappa} \left(\frac{r}{u_x}\right)^2
    \right)
    dt
\end{equation}
The first two terms regulate the control inputs (steering, acceleration) to discourage excessive steering and acceleration, as in racing the vehicle should drive as straight and as close to its top speed as possible to minimize the lap time. 
The third term aims to minimize the vehicle's lateral velocity to prevent the vehicle from sliding as a safety and stability consideration.
The last term minimizes the path curvature, again in line with the goal of driving as straight as possible to allow for maximum speed.

\textbf{Control Cost ($J_{\text{control}})$} is expressed as: 
\begin{equation}
    \label{eq:control_cost}
    J_{\text{control}} = \int^{t_f}_{t_0} \left(
    w_{\dot{\delta_f}} \dot{\delta_f}^2 + w_{j_x} j_x^2
    \right)
    dt
\end{equation}
The two terms in the expression minimize the rate of change of control inputs to ensure smooth steering and acceleration.

\textbf{Spatial Envelope Cost ($J_{\text{envelope}}$)} brings the vehicle inside the envelope to ensure safety. 
It is given as:
\begin{align}
\label{eq:hyperbolic} J_{\text{envelope}} &= \sum_{i=1}^{n_\text{ds}} w_{\text{envelope}}  \log\left(1 + e^{ - \theta_\text{hp}  (g_{sm} + g_{\text{envelope}}(\mathcal{T}(i)) } \right)
\end{align}
Here, $n_\text{ds}$ is the number of collocation points and $\theta_\text{hp}$ is the hyper-parameter for the softplus function. 
By employing a softplus function, the cost is designed to remain approximately constant when the vehicle is within the envelope and to increase linearly with respect to the distance away from the envelope when the vehicle is outside the envelope, similar to \cite{karino2023shared}.	
This is illustrated by the contour plot of the cost along a portion of the racetrack in Fig. \ref{fig:KS}-E5. 
As long as the vehicle remains within the racetrack, the cost is approximately zero.
Conversely, when the vehicle veers off the track, the cost increases linearly as shown by the color change on the plot.
This cost promotes the planned trajectories to remain within the envelope.

\textbf{Specific Cost ($J_{\text{specific}}$)} is a placeholder where costs related to specific scenarios can be included. The detailed expression of this type of cost is discussed in later sections when specific scenarios are studied.

\subsubsection{Solving Strategy}
In all scenarios tested in this work, the MPC framework plans for $T_p =6.75$ s into the future with 25 collocation points, resulting in 24 time intervals denoted as $T_1, T_2, ..., T_{24}$. 
These values were selected via a sweep over $T_p$ and collocation grid densities to balance look-ahead, real-time feasibility, and numerical integration error.
The time intervals are distributed as:
\begin{equation}
    T_i = \begin{cases}
        0.15 & \text{if} \quad i \leq 15 \\
        0.5 & \text{otherwise}
    \end{cases}, \; i=1,..., 24
\end{equation}
with $i$ being an index for the time interval.
The shorter time interval at the beginning of the prediction horizon ensures that the propagation error does not become significant enough to jeopardize vehicle operation. 
A longer time horizon at the end allows the MPC to look further into the future without incurring a high computational cost, facilitating earlier responses to upcoming turns.	
This approach offers an advantage over using the same time interval throughout the prediction horizon, as the latter either fails to provide adequate foresight or imposes additional computational burden on the solver to achieve the same propagation accuracy.	
The backward Euler integration scheme is employed between each time step due to its computational efficiency and unconditional stability.	
The nonlinear optimization problem is converted into a nonlinear program using the JuMP package in Julia and solved with Ipopt \cite{wachterIpopt}.
This implementation enables the algorithm to plan trajectories at a frequency of at least 10 Hz.

\subsection{Spatial Envelope Design} 

The problem of designing the blocks that form a drivable spatial envelope poses a trade-off.
On the one hand, to ensure fast computation of the LSE function, a small number of blocks is desired.
On the other hand, to cover the drivable region as accurately as possible, a large number of small blocks may be needed.

This trade-off problem is simplified by breaking it into two subproblems: designing an optimal block for segments of the given road boundary recursively, and combining the designed blocks. 

For designing a single block, possible solutions include optimization and machine learning. 
While optimization produces a high quality result at the cost of a longer solving time, machine learning methods are often faster, but the solution quality is not guaranteed. 
To address the drawbacks of both solutions,  this work proposes a combined approach as illustrated in \ref{fig:envBlockDiagram}. 
The envelope is constructed by sequentially designing and connecting individual blocks. Initially, road boundary data are segmented into overlapping regions. 
For each region, reinforcement learning (RL) is used to provide a general estimate of block parameters $L, W, C, \theta$, which refer to block half length, half width, left center, and block yaw angle. 
These estimates are then refined through an optimizer, maximizing the block's area inside the road segment. 
The start point of the next road segment is set to be the center of the designed block. 
The blocks are thus planned recursively until the end of the road boundary data.

\begin{figure}
    \centering
    \includegraphics[width=0.49\textwidth]{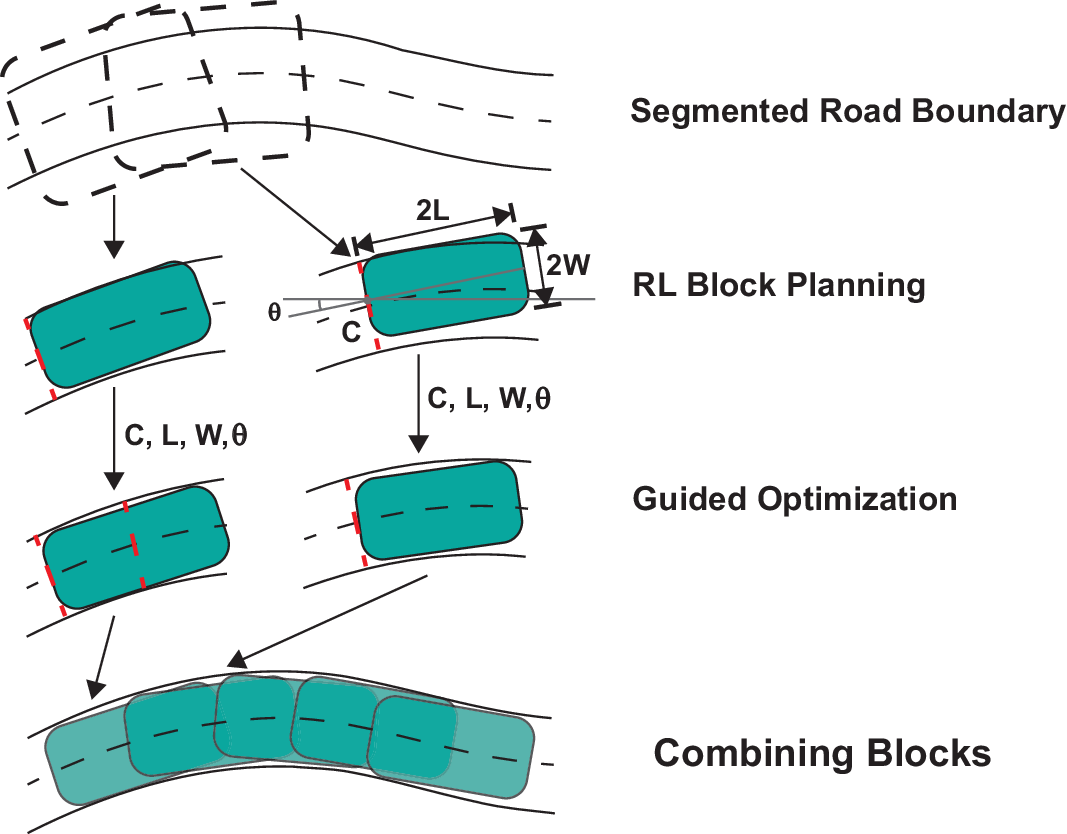}
    \caption{Spatial envelope planning framework. Block left center, yaw angle, half length, and half width are passed from RL to optimizer as initialization parameters.}
    \label{fig:envBlockDiagram}
\end{figure}

\subsubsection{Reinforcement Learning Based Policy Optimization}
This work proposes using Proximal Policy Optimization (PPO) for spatial envelope planning policy training \cite{schulman2017proximal}.

\textbf{Road Generator}: The observation data for the RL agent is generated using a road generator. 
This generator first randomizes a curvature array $\kappa_i$, offset by another random value $\epsilon_i$. 
Based on the curvatures, the centerline of the road $C_{i}$ is formed as follows:
\begin{align}
    C_{0,x}&=0, \;
    C_{i,x}=C_{i-1,x}+\cos{\kappa_i}\\
    C_{0,y}&=0, \;
    C_{i,y}=C_{i-1,y}+\sin{\kappa_i}
    \label{eq:block_centerline}
\end{align}
To avoid unrealistic variations in road width, five random widths between the given maximum and minimum road widths are assigned to random indices. 
The widths at other indices are interpolated linearly between these initial values. 
The width array is then smoothed as $w_i$ using a Savitzky–Golay filter \cite{savitzky1964smoothing}.
The left and right boundary points $l_i, r_i$ are computed using the following equations:
\begin{align}
    l_{i,x}&=C_{i,x}-w_i \sin{\kappa_i}\\
    l_{i,y}&=C_{i,y}+w_i \cos{\kappa_i}\\
    r_{i,x}&=C_{i,x}+w_i \sin{\kappa_i}\\
    r_{i,y}&=C_{i,y}-w_i \cos{\kappa_i}
\end{align}
\textbf{Reward Function}: The reward function is designed to encourage the designed block to maximize the area within the road boundaries while penalizing any portion that extends beyond the boundaries. 
The road is segmented into quadrilaterals $Q_i$, each formed by two points from the left boundary and two points from the right boundary. 
The reward function $\mathcal{\chi}$ is expressed as:
\begin{equation}
    \mathcal{\chi}=(A_\text{in}-A_\text{out})2L
\end{equation}
where $A_\text{in}$ is the area of the designed block within the road boundaries, $A_\text{out}$ is the area outside the boundaries, and $L$ is the half length of the block. 
This formulation encourages longer blocks.

For simplicity, each block is approximated as a rectangle with the same left center, half width, half length, and yaw angle. 
This approximation is conservative, since the area of the 4th order norm block is fully included inside the approximated rectangle.


\subsubsection{Policy Guided Block Optimization}

The final block design is produced by solving an optimization problem to maximize the block size. 
The optimization formulation is as follows:
\begin{equation}
\begin{array}{rl}
\text{maximize} & L W \\[3pt] 
\text{s.t.} & g_i(\mathbf{x}) \leq 0, \quad i = 1, 2, \dots, m \\ 
            & h_j(\mathbf{x}) = 0, \quad j = 1, 2, \dots, p
\end{array}
\label{eq:envOptimization}
\end{equation}
Block parameters $L$ and $W$ are re-scaled by a factor and then used to initialize the optimization.
The factor is less than 1 to prevent infeasible initial conditions. 
Please refer to \ref{sec:env_planning_result} for the values. 

\subsubsection{Iterative Block Design}
The spatial envelope for driving is generated by iteratively designing the blocks.
The starting position of the last block is set to be on the road cross-section line passing through a point $S_{k}$ defined by the last block.
$S_k$ is $L$ ahead of the last left center point in the length direction. 

\section{Validation of the Vehicle Model} \label{sec:model_fidelity}

\begin{figure*}
\centering
\ifthenelse{\boolean{PngBool}}{%
    \includegraphics[width=0.93\textwidth]{PngVer/Fig3ModelFidelity.png}
}{%
    \includegraphics[width=0.93\textwidth]{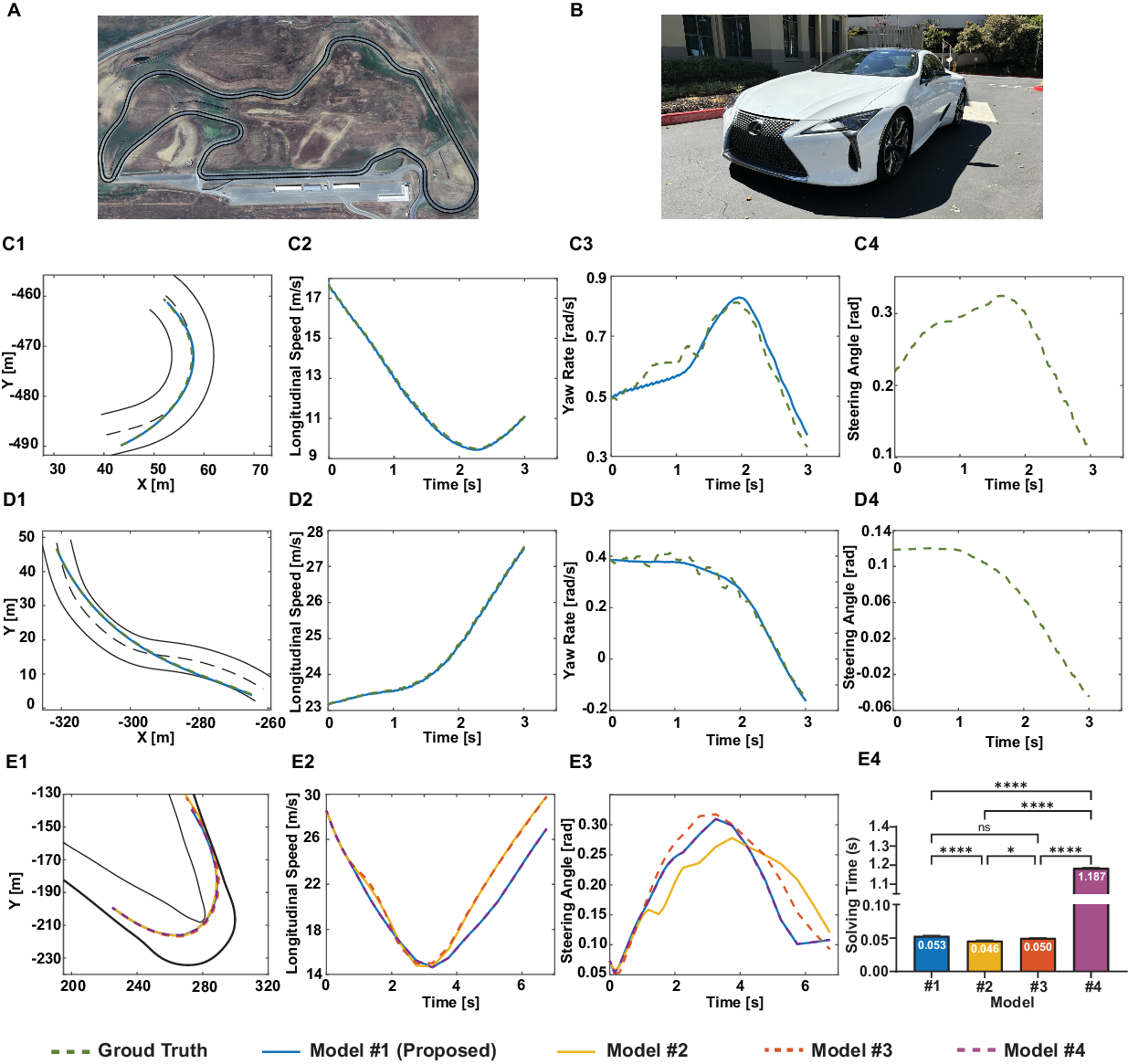}
}
\caption{The results in Sec.~\ref{sec:model_fidelity}. (\textbf{A}) An aerial view of Thunderhill West from the satellite. (\textbf{B}) A snapshot of Toyota Research Institute 2019 LC 500. (\textbf{C1} - \textbf{D4}) The plots comparing the proposed model with the real-vehicle data in open-loop simulation. The green dashed line represents the real-vehicle data and the blue solid line represents the proposed model. The subplots \textbf{C1} - \textbf{C4} show the results in a hairpin turn, while the subplots \textbf{E1} - \textbf{E4} show the results in a high-speed turn. (\textbf{C1} \& \textbf{E1}) The X-Y path comparisons in the hairpin and high-speed turn, respectively. (\textbf{C2} \& \textbf{E2}) The longitudinal speed comparisons in the hairpin and high-speed turn, respectively. (\textbf{C3} \& \textbf{E3}) The yaw rate comparisons in the hairpin and high-speed turn, respectively. (\textbf{C4} \& \textbf{E4}) The steering angle commands used by the real vehicle in the hairpin and high-speed turn, respectively. (\textbf{E1 - \textbf{E4}}) The comparisons of four vehicle models in one-time optimization. The blue solid line presents Model~\#1, which is referred to as the proposed model. The yellow solid, red dashed, and magenta dashed lines represent the benchmark Models~\#2, \#3, and \#4, respectively. (\textbf{E1}) The comparisons in optimal X-Y paths. (\textbf{E2}) The comparisons in optimal longitudinal speeds. (\textbf{E3}) The comparisons in optimal steering angles. (\textbf{E4}) Statistical comparisons in solving times. }
\label{fig:ModelFidelity}
\end{figure*}

This section evaluates vehicle model fidelity through two studies. 
In the first study, the proposed vehicle model is compared to real-vehicle data in  open-loop simulations to validate its accuracy in prediction. 
The second study establishes the effectiveness of the proposed model as a prediction model for optimal control by comparing its performance against three benchmark models.

In the first study, a real vehicle is tested at Thunderhill West, an aerial view of which shown in Fig.~\ref{fig:ModelFidelity}-A. 
The vehicle is a Toyota Research Institute 2019 LC 500, shown in Fig.~\ref{fig:ModelFidelity}-B. 
Vehicle telemetry data were collected by manually driving along the track, and two regions were selected for open-loop comparisons: one at a hairpin  turn (Fig.~\ref{fig:ModelFidelity}-C1) and the second at a high-speed turn (Fig.~\ref{fig:ModelFidelity}-D1). 
In the open-loop simulation, the initial states of the proposed vehicle model (Model \#1) are set identical to the initial states of the real vehicle, and the control inputs are linearly interpolated based on the time series from the recorded driver inputs. 
From Fig.~\ref{fig:ModelFidelity}-C1 to D4, the green dashed line represents the trajectory of the real vehicle, while the blue solid line represents that of the proposed model.
Based on Fig.~\ref{fig:ModelFidelity}-C1, the difference in path between the real vehicle and the proposed model is capped at 45 cm.
The longitudinal speeds are identical, as shown in Fig.~\ref{fig:ModelFidelity}-C2, since the control input of longitudinal jerk is derived from the speed data. 
Comparisons in yaw rate are shown in Fig.~\ref{fig:ModelFidelity}-C3. 
Noticeable differences in yaw rate might be due to the omission of chassis suspension in the proposed model.	
In this model, the roll and pitch angles are assumed to be zero. 
On the contrary, these angles can change in reality, especially in the hairpin where lateral acceleration is large. 
The steering angle of the real vehicle is shown in Fig.~\ref{fig:ModelFidelity}-C4.
The second control input of steering rate for the proposed model is derived from this data. 
Similar comparisons in the high-speed turn are shown from Fig.~\ref{fig:ModelFidelity}-D1 to D4. 
In the high-speed turn, the model predicts the global locations accurately with a maximum error of 61 cm.
Compared to the performance in the hairpin turn, the yaw rates are closer perhaps due to the suspension being less excited in the high-speed turn than in the hairpin turn. 
In the real vehicle data, the small oscillations observed at the beginning are at a frequency of about 2.5 Hz and likely correspond to the suspension \cite{rajeev2019pratheeksudi}.

In the second study, four models are used as prediction models for a one-time optimization. 
Model~\#1 is the proposed vehicle model. 
Model~\#2 considers longitudinal load transfer with the pure-slip tire model. 
Model~\#3 ignores longitudinal load transfer and the friction circle limit. 
Model~\#4 represents the model used in \cite{talbot2023shared}. 
Models~\#2 and~\#3 are included as an ablation study.
In \cite{talbot2023shared}, both longitudinal load transfer and the friction circle limit are considered. 
However, unlike Eq.~\eqref{eq:acc_upper_limit} and Eq.~\eqref{eq:acc_lower_limit}, the expression for load transfer is not derived as linear constraints.
Instead, Model \#4 adheres to the friction circle constraint given by Eq. \eqref{eq:original_friction_ellipse}.
Furthermore, the softplus function of Eq.~\eqref{eq:Lateral_Force} is not used to calculate the maximum lateral tire force. 

The optimal trajectories of each model are shown in Fig.~\ref{fig:ModelFidelity}-E1 to E3. 
Model~\#1 and Model~\#4 exhibit identical optimal trajectories, as depicted in Fig.~\ref{fig:ModelFidelity}-E1 to E3, since they share the same equations of motion. 
Model~\#2 and Model~\#3 neglect the friction circle limit, resulting in higher mobility predictions compared to Model \#1 and \#4.
On the other hand, these two models show similar paths, and their predicted paths are longer than those of Model~\#1 and Model~\#4 in Fig.~\ref{fig:ModelFidelity}-E1. 
This behavior is explained by the similar predicted longitudinal speeds of Model~\#2 and Model~\#3, which are larger than those of Model~\#1 and Model~\#4, as shown in Fig.~\ref{fig:ModelFidelity}-E2. 
In short, the vehicle is predicted to have higher speed during corners.

By including the load transfer and neglecting the friction circle limit, Model~\#2 can have larger maximum lateral tire forces on the front wheels than on the rear wheels during deceleration, which increases the likelihood of oversteer due to load transfer. 
Meanwhile, Model~\#2 has greater maximum lateral tire forces on the rear wheels than on the front wheels during acceleration, which increases the likelihood of understeer.
These behaviors explain why the steering angle of Model~\#2 is smaller than that of Model~\#1 and Model~\#4 before 3 seconds and larger than the other two models afterward, as shown in Fig.~\ref{fig:ModelFidelity}-E3. 

Model~\#3 neglects both load transfer and the friction circle, so the imbalance in maximum lateral tire forces between the front and rear tires is less significant than in Model~\#2, making the vehicle closer to neutral-steer. 
As a result, the steering angle of Model~\#3 is closer to the proposed model than to Model~\#2, as shown in Fig.~\ref{fig:ModelFidelity}-E3. 

Pairwise comparisons of solving times, conducted as post-hoc tests following Analysis of Variance (ANOVA), are displayed in Fig.~\ref{fig:ModelFidelity}-E4.
In Fig.~\ref{fig:ModelFidelity}-E4, the asterisks over the plots demonstrate the statistical significance level between the planners: $*$ indicates $p < 0.05$, $* *$ indicates $p < 0.01$, $* * *$ indicates $p < 0.001$, and $* * * *$ indicates $p < 0.0001$, where $p$ is the p-value reported by ANOVA. 
Non-significant differences are denoted with \textit{ns}.
Each model is optimized 100 times. 
Although Model~\#1 and Model~\#4 follow the same assumptions, Model~\#4 requires more time — 1.14 s longer — due to different mathematical expressions. 
Compared to Model~\#1, Model~\#2 and Model~\#3 make some simplifications, reducing solving time by 7 ms and 3 ms, respectively, but at the cost of fidelity.

In summary, the proposed vehicle model aligns well with real-vehicle data in open-loop simulations.	
Neglecting the longitudinal load transfer and friction circle limit can lead to an overestimation of vehicle mobility, while the reduction in solving time due to these simplifications is minimal.
The proposed model accounts for both load transfer and the friction circle limit, and it is more computationally efficient than the benchmark model in \cite{talbot2023shared}.

\section{Simulation \& Experimental Results}
\label{sec:Results}

\subsection{Race Track: High Performance Racing}
\label{sec:Racing}
\begin{figure*}
\centering
\ifthenelse{\boolean{PngBool}}{%
    \includegraphics[width=0.93\textwidth]{PngVer/Fig4RacingResult.png}
}{%
    \includegraphics[width=0.93\textwidth]{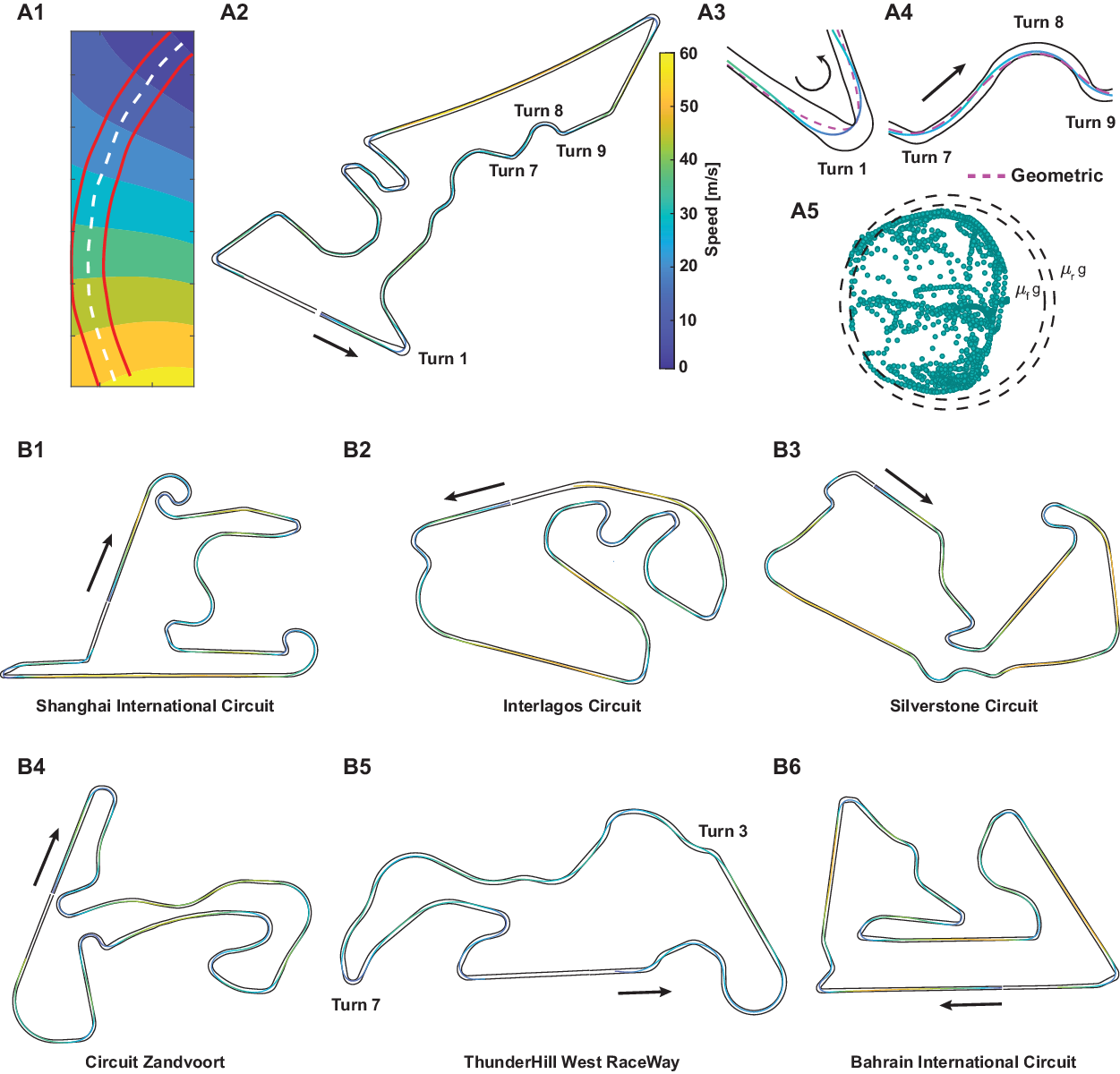}
}
\caption{The results in Sec.~\ref{sec:Racing}1. (\textbf{A1}) The contour for the cost-to-go. (\textbf{A2}) The trajectory of the vehicle in the Circuit of The Americas (COTA). The color on the line represents the speed of the vehicle. (\textbf{A3-A4}) The detailed plot of select turns in COTA. The dashed magenta line represents the geometric line that has the least curvature. (\textbf{A5}) C.G. acceleration plot. (\textbf{B1-B6}) The trajectories of the vehicle in Shanghai International Circuit, Interlagos Circuit, Silverstone Circuit, Circuit Zandvoort, Thunderhill West RaceWay, Bahrain International Circuit. The line colors follow the same color bar in subplot \textbf{A2}.}
\label{fig:RacingResult}
\end{figure*}

\begin{figure*}
\centering
\ifthenelse{\boolean{PngBool}}{%
    \includegraphics[width=0.9\textwidth]{PngVer/Fig6RealVehRace.png}
}{%
    \includegraphics[width=0.9\textwidth]{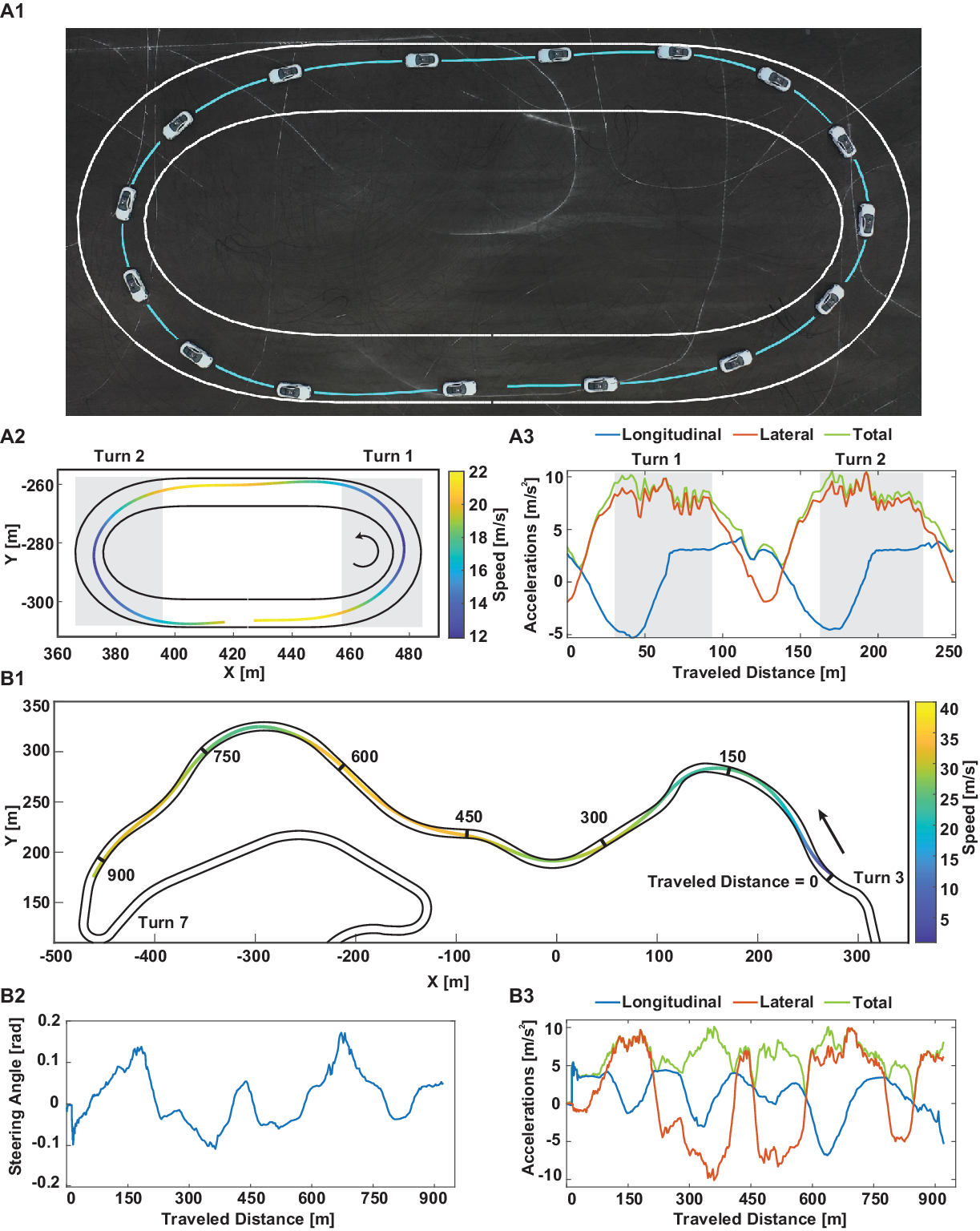}
}
\caption{The real vehicle testing results in Sec.~\ref{sec:Racing}2. (\textbf{A1}) An aerial view showing the path of oval racing. The vehicle poses are superpositioned from the test video with an interval of 1 s. The inner and outer white oval lines represent the actual boundaries of the oval race track. The cyan solid line represents the path of C.G. point.
(\textbf{A2}) The trajectory of the vehicle in the oval race track. The color on the line represents the speed of the vehicle. The rotation mark means that the vehicle is moving counter-clockwise. Two shaded gray rectangles represent the regions for Turn 1 and Turn 2.
(\textbf{A3}) The plot of longitudinal, lateral, and total accelerations along the traveled distance in the oval race track. The regions of Turn 1 and 2 are emphasized with shaded gray rectangles.
(\textbf{B1}) The trajectory of the vehicle in the Thunderhill West Raceway. The color on the line represents the speed of the vehicle. The inner and outer black solid lines represent the boundaries for the C.G. point of the vehicle. The traveled distances are labeled on the path with an interval of 150 m.
(\textbf{B2}) The plot of steering angle along the traveled distance in Thunderhill West Raceway.
(\textbf{B3}) The plot of longitudinal, lateral, and total accelerations along the traveled distance in Thunderhill West Raceway.
}
\label{fig:RealVehRace}
\end{figure*}

To test the proposed algorithm to its limits, racing applications are the optimal choice, as their primary goal is to safely exploit the vehicle's dynamic limits to achieve maximum speed on a racetrack.	
The boundaries on the racetrack naturally define the drivable region, thereby creating the envelope.	
Simulation studies are conducted in Sec.~\ref{sec:racing_simulation_study}.
To further validate the proposed algorithm, two physical experiments are conducted on two tracks as described in Sec.~\ref{sec:racing_experimental_validation}.

For these racing scenarios, a cost-to-go term is added to the cost function to meet the objectives of racing in addition to using the standard formulation described in Sec. \ref{sec:env_mpc}. This cost-to-go term is described next.

\textbf{Cost to Go}: To motivate the MPC to drive the vehicle as fast as possible, a cost-to-go term is designed to promote maximization of the vehicle's travel distance in each prediction horizon. 
Assuming the vehicle's starting position ($x_0, y_0$) corresponds to the closest centerline point ($c_0$) at the arc length $s_0$ on the track, the farthest centerline point used to calculate the cost-to-go term corresponds to $s_f = s_0 + u_{x_{\max}} T_p$.
In Fig. \ref{fig:RacingResult}-A1, the bottom point corresponds to $s_0$, and the top right point at the end of the lane corresponds to $s_f$.
For each point on the centerline (corresponding to arc length $s_p$), a total of 15 points (including the centerline point itself) are uniformly sampled across the track width in the direction perpendicular to the centerline. 
All sampled points are labeled with the same arc length $s_p$.

For all points with arc length $s_p$, the cost-to-go terms are defined as ($s_f - s_p$).	
In this setting, the cost is highest at the starting point and reduces to zero at the end. 
At a given arc length, the vehicle incurs the same cost regardless of its lateral position within the track bounds.
To estimate the cost in optimization, a third-order polynomial regression is performed to express the cost with respect to its location.
Thus, the cost-to-go term is expressed as: 
\begin{equation}
    J_{\text{go}} = \sum_{i=0}^{3}\sum_{j=0}^{3-i} w_{\text{go}}(i, j)x^iy^j
\end{equation}
Fig. \ref{fig:RacingResult}-A1 shows the contour plot of the fitted cost. 
This cost is included as $J_{\text{specific}}$ in \eqref{eq:genericCostFunction} as a specific cost for racing applications; i.e., $J_{\text{specific}} = J_{\text{go}}$.

\subsubsection{Simulation Study}\label{sec:racing_simulation_study}

\begin{table}
\caption{Simulated Lap Time and Length of Spatial Envelope MPC}
\label{tab:track_data}
\centering
\begin{tabular}{lcc}
\hline
\textbf{Name} & \textbf{Lap Time (s)} & \textbf{Length (m)} \\
\hline
Bahrain & 141.96 & 5200 \\
COTA & 153.36 & 5200 \\
Catalunya & 132.56 & 4500 \\
Interlagos & 114.66 & 4100 \\
Shanghai & 149.76 & 5250 \\
SilverStone & 144.66 & 5700 \\
Thunderhill 2-mile & 83.36 & 2450 \\
Zandvoort & 124.86 & 4100 \\
\hline
\end{tabular}
\end{table}

In Fig. \ref{fig:RacingResult}-A2, the algorithm is put to test in Circuit of The Americas (COTA), a well-known race track that has been in Formula 1  Grand Prix since 2012. 
The trajectory of the vehicle is plotted and the color gradient on the trajectory represents the vehicle speed. 
The vehicle is capable of accelerating on the straight runs and decelerating before the turns so that it does not lose control. 


Several key turns are magnified in Fig. \ref{fig:RacingResult}-A3 and A4 to demonstrate the performance of the Spatial Envelope MPC.  In Fig. \ref{fig:RacingResult}-A3, a hairpin turn scenario is depicted, in which the vehicle must execute a sequence of dynamic maneuvers: first, applying heavy braking to decelerate, followed by aggressive steering to make the sharp turn, and finally accelerating upon exiting the turn.
The trajectory plots in this figure indeed show this desired behavior.
The geometric reference line, shown in purple, is derived from \cite{braghin2008race}, where each segment is modeled using third-order polynomials, formulated to solve an optimization problem that minimizes curvature along the path.
A notable deviation from the least-curvature geometric line is observed in the vehicle’s trajectory, indicating a distinct driving behavior that emerges during the hairpin turn.
Compared to the least-curvature geometric line, the vehicle exhibits a late braking pattern when approaching the turn and reaches the apex later. 
This strategy enables the vehicle to maintain higher exit speeds, resulting in faster acceleration after the turn compared to following the geometric line.
This driving technique, commonly referred to as the late-apex strategy, is frequently employed by professional drivers to achieve faster lap times.
In this setting, the algorithm is not assisted by prior knowledge but designs the trajectory on the fly, naturally embedding the late-apex strategy within the constraints and cost formulation.

In Fig. \ref{fig:RacingResult}-A4, the vehicle does not follow the geometric line in relatively high-speed turns, either. 
After Turn 7, the vehicle intentionally deviates from the apex to slow down further and execute a more aggressive turn, eventually hitting the late-apex point.	
The rationale is that, after Turn 9, the track transitions into a long straight run where maintaining high initial speed is crucial for the vehicle.	
Adhering strictly to the geometric line would cause loss of speed during the turns, which is undesirable.	
The Spatial Envelope MPC predicts 6.75 s into the future for 6.75 s when making decisions to prevent settling on a locally optimal trajectory.

In Fig. \ref{fig:RacingResult}-A5, the longitudinal and lateral accelerations are sampled throughout the entire lap and are compared to the front and rear friction force limits shown as dashed circles. 
The vehicle is safely pushed to the limits and is often operating near the limits in this application.

The same formulation is taken without any re-tuning and applied to other world-famous racetracks, as well; namely, Shanghai International Circuit (Fig.~\ref{fig:RacingResult}-B1), Interlagos Circuit (Fig.~\ref{fig:RacingResult}-B2), Silverstone Circuit (Fig.~\ref{fig:RacingResult}-B3), Circuit Zandvoort (Fig.~\ref{fig:RacingResult}-B4), Thunderhill West Raceway (Fig.~\ref{fig:RacingResult}-B5), and Bahrain International Circuit (Fig.~\ref{fig:RacingResult}-B6).
The algorithm can safely operate the vehicle across all six racetracks without modifications to the formulation; the only changing parameter is the boundary of the racetrack.
The travel length and lap time for each comparison is reported in Table. \ref{tab:track_data}.
These results highlight the method's general applicability and zero-shot capability.

\subsubsection{Experimental Validation}\label{sec:racing_experimental_validation}

To evaluate its real-world performance, the Spatial Envelope MPC algorithm is deployed on a prototype vehicle based on a 2019 LC 500 from Toyota Research Institute, as illustrated in Fig.\ref{fig:ModelFidelity}-B. 
The algorithm is tested on an oval race track and the Thunderhill West Raceway. 

The oval race track was virtually defined within one of the skid pads in Thunderhill Raceway Park. 
Fig. \ref{fig:RealVehRace}-A1 illustrates the sequence of vehicle poses at 1-second intervals, demonstrating the algorithm’s ability to maintain the vehicle within track boundaries.
To further analyze the vehicle's mobility, the speed and acceleration profiles are presented in Fig.\ref{fig:RealVehRace}-A2 and A3, respectively.
In Fig. ~\ref{fig:RealVehRace}-A2, a maximum speed of 21.03 m/s is achieved near the entrance of Turns 1 and 2, while the minimum speed of 11.88 m/s occurs near the apex.
Fig.~\ref{fig:RealVehRace}-A3 shows that the maximum total acceleration of 10.57 m/s² is reached in Turns 1 and 2, constrained by the maximum available friction forces.
During Turns 1 and 2, the total acceleration remains close to its maximum, indicating that the algorithm effectively drives the vehicle to its dynamic limits. During the two turns, the majority of the available friction forces are utilized as centrifugal forces to sustain curvilinear motion.

Fig.~\ref{fig:RealVehRace}-B1 to B3 present the results of testing the Spatial Envelope MPC algorithm on the Thunderhill West Raceway, a significantly more complex and dynamic track compared to the virtual oval track.
In Fig.~\ref{fig:RealVehRace}-B1, the vehicle's trajectory is visualized along the track, with speed color gradients along the traveled path. 
The traveled distance is marked as vertical black lines on the racetrack at 150 m intervals.
The vehicle starts from Turn 3 and navigates through the track, eventually reaching Turn 7 (marked in  Fig.~\ref{fig:RacingResult}-B5), demonstrating the algorithm’s ability to adapt to varying curvatures and track geometries in real time. 
This specific section is selected because the racetrack as a whole includes challenging terrain topologies, which could significantly impact the performance.
because the current study has not yet considered the effects of terrain topologies on the racetrack. 
However, the section between Turn 3 and Turn 7, while providing significant challenges, is relatively flat, making it an ideal segment for testing.
Due to heavy rain prior to the test date, the parameters for the front and rear friction coefficients are defined more conservatively as $(\mu_f, \mu_r) = (0.9, 0.95)$.

Notably, the speed profile along the trajectory shows that the algorithm effectively manages the vehicle’s velocity depending on the curvature of the road. 
The vehicle reaches higher speeds on straight runs, with a maximum of 35.69 m/s (79.86 mph / 128.52 kph) recorded along a high-speed turn section  between the 450 m and 600 m markers. 
The time-to-goal is 35.52 s for a traveled distance of 921 m. 

Fig. ~\ref{fig:RealVehRace}-B2 illustrates the vehicle's steering trajectory along the race track, where aggressive maneuvers are evident from the significant steering angles executed at high speeds.
Fig. ~\ref{fig:RealVehRace}-B3 shows the vehicle's longitudinal, lateral, and total accelerations over the traveled distance. 
The total acceleration peaks at around 10 m/s², indicating that the algorithm operates the vehicle near its dynamic limits, especially when negotiating tight corners.
The lateral acceleration component dominates during cornering, confirming again that the majority of available friction forces are used to counteract centrifugal forces and maintain traction in the turns.
Meanwhile, longitudinal acceleration spikes primarily occur during straight sections when the vehicle accelerates or decelerates before entering a turn.
This careful balance between longitudinal and lateral forces showcases the algorithm’s ability to push the vehicle to its performance limits while maintaining stability and control throughout the track.

\subsection{On-Road: Emergency Collision Avoidance} \label{sec:CIS}

\begin{figure}
\centering
\includegraphics[width=0.45\textwidth]{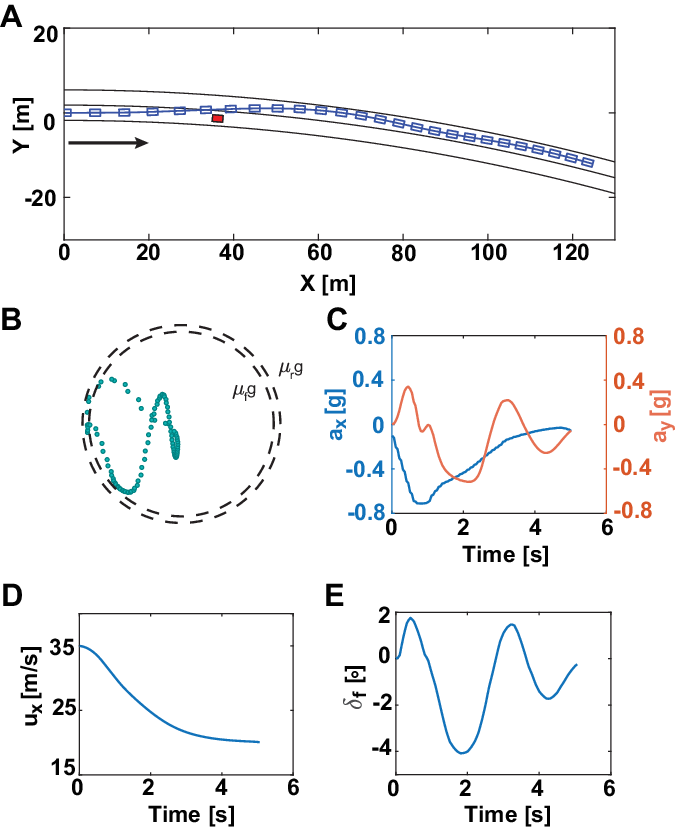}
\caption{The results in Sec.~\ref{sec:CIS}. (\textbf{A}) The plot of the X-Y path. (\textbf{B}) The plot for longitudinal and lateral accelerations. (\textbf{C}) The time histories of longitudinal and lateral accelerations. The longitudinal acceleration is labeled on the left and the lateral acceleration is labeled on the right. In subplots \textbf{B} \& \textbf{C}, the gravitational constant $g$ is used as units for both longitudinal and lateral accelerations. (\textbf{D}) The time history for the longitudinal speed.  (\textbf{E}) The time history for the steering angle.}
\label{fig:CIS}
\end{figure}

A similar spatial envelope constraint concept can also be applied to emergency collision avoidance in on-road scenarios.	
This section demonstrates this in simulation.
As shown in Fig. \ref{fig:CIS}-A, the vehicle (depicted in blue) is traveling through a curved highway at an initial speed of 35 m/s (77 mph).
At $t = 0$, a red vehicle that is 35 meters ahead suddenly appears, blocking the original lane and remaining stationary.
This situation constitutes an emergency because the vehicle has only one second to change lanes completely.	
The controller is also programmed to reduce the vehicle's speed to ensure it can be safely handed back to the human operator after the emergency maneuver.	
Compared to \cite{wurts2021collision} with constant speed settings, this requirement brings additional challenges to coordinating the longitudinal and lateral tire forces. 
Intuitively, the benefits of these settings are twofold: 1) The additional degree of control freedom allows for better maneuvering capability, enabling the vehicle to perform more aggressive maneuvers and thereby enhancing safety in emergency scenarios.
2) It provides human drivers with a better takeover experience when the vehicle is stabilized in the other lane.

\textbf{Speed Cost}: To accomplish this, a simple cost function is added in $J_{\text{specific}}$ to actively slow down the vehicle:

\begin{equation}
    J_{\text{speed}} = \int_{t_0}^{t_f} (u_x - u_\text{des})^2
\end{equation}

Here the $u_\text{des}$ is 20 $m/s$ (44 mph) to ensure that the human driver can control the vehicle after the emergency maneuver.

The envelope used in this formulation is shown in Fig. ~\ref{fig:res_shape}-A3.


The resultant path of the collision imminent steering scenario is shown in Fig.~\ref{fig:CIS}-A. 
Vehicle positions are represented by a sequence of blue rectangles, each separated by a 0.2-second interval.	
The solid blue curve connecting each blue rectangle represents the history of the vehicle's C.G. point. 
The histories of C.G. accelerations are plotted in Fig.~\ref{fig:CIS}-B and C. 
In Fig.~\ref{fig:CIS}-B, the x-axis represents longitudinal acceleration, while the y-axis denotes lateral acceleration.
Inner dashed circle indicates the acceleration limit of $\mu_f g$, while the outer dashed circle indicates the limit of $\mu_r g$.	
Several points in Fig.~\ref{fig:CIS}-B are located between the two dashed circles, indicating that the algorithm pushes the vehicle to its dynamic limits to avoid the obstacle.
The time histories of two accelerations are shown in Fig.~\ref{fig:CIS}-C. To avoid the obstacle, the vehicle initiates a large steering angle, resulting in large lateral acceleration and the peak appears at around 0.5 s. 
An interesting trade-off between longitudinal and lateral acceleration is observed as the vehicle stabilizes in the second lane at approximately $t = 2 \text{ s}$.	
Given the speed cost, the vehicle should decelerate as rapidly as possible. However, the algorithm opts to apply less force for deceleration and instead actively redistributes forces in the lateral direction to assist with turning.	
The time history of longitudinal speed is shown in Fig.~\ref{fig:CIS}-D. 
The speed smoothly decreases from 35 m/s to 20 m/s which is the desired exit speed defined for this scenario. 
The steering angle trajectory is shown in Fig.~\ref{fig:CIS}-E. 
As mentioned before, the first peak of the steering angle appears at around 0.5 s, corresponding to the first peak of lateral acceleration. 
This maneuver leads to a large lateral speed and yaw rate to make sure that the vehicle can avoid the obstacle in time. 
Consequently, greater counter-steering is necessary to stabilize the vehicle and keep it within the lane boundary, which is shown as the first valley (at around 2 s) in Fig.~\ref{fig:CIS}-E.

In summary, the Spatial Envelope MPC is capable of performing collision avoidance in emergency scenarios on road. 
This method is capable of applying aggressive maneuvers to avoid obstacles while preserving vehicle stability and ensuring safety.	 
Experimental results confirm that the vehicle adheres to dynamic limits, achieves the desired speed reduction, and executes safe lane-changing maneuvers, thereby demonstrating the efficacy of the proposed method.

\subsection{Off-Road: Trail Navigation}
\label{sec:offroad}
\begin{figure*}
\centering
\ifthenelse{\boolean{PngBool}}{%
    \includegraphics[width=0.8\textwidth]{PngVer/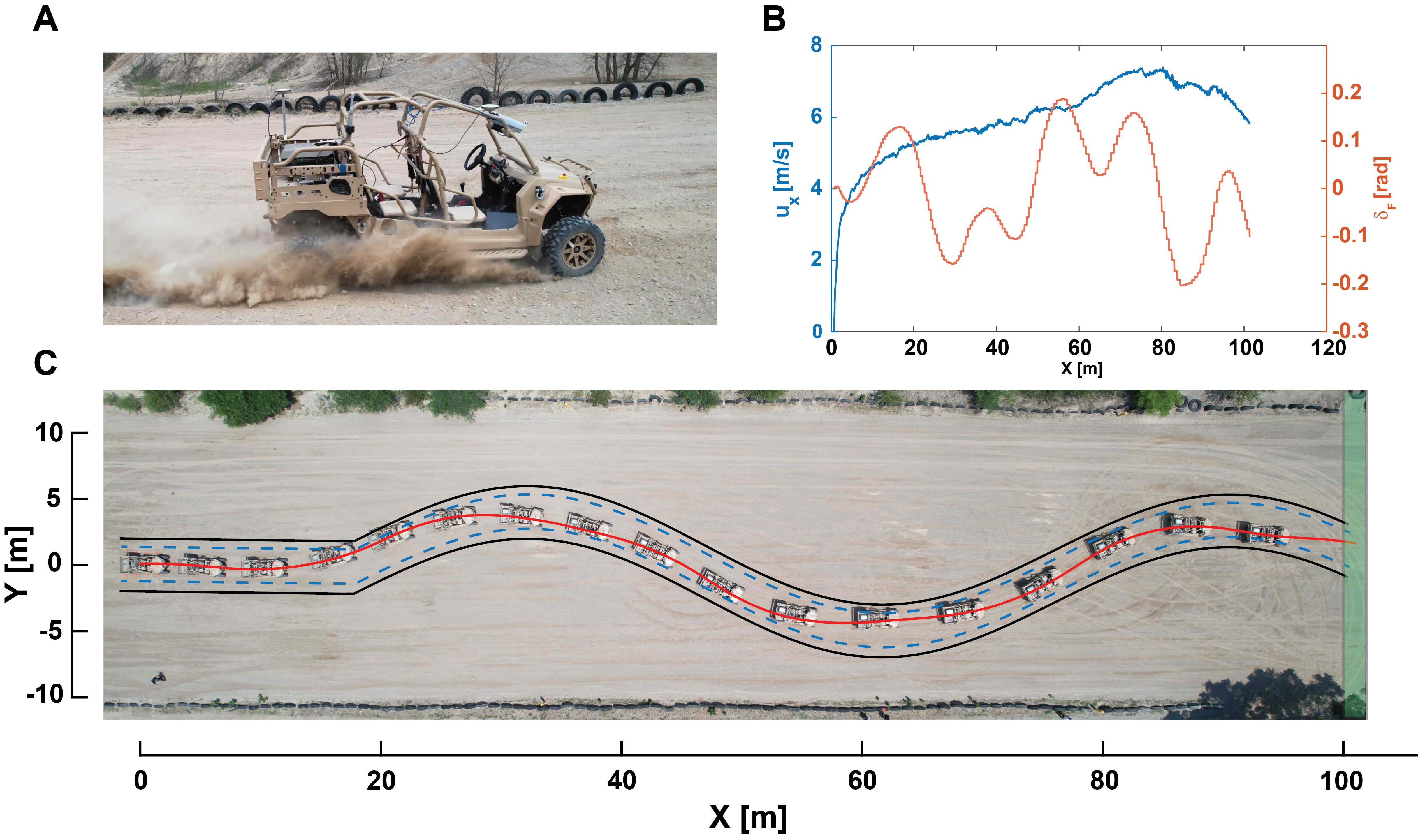}
}{%
    \includegraphics[width=0.8\textwidth]{Fig6OffRoad.png}
}
\caption{The results in Sec.~\ref{sec:offroad}. (\textbf{A}) A snapshot of the test vehicle - Polaris MRZR - driving in the off-road environment. (\textbf{B}) The plots of longitudinal speed and steering angle versus the X position. (\textbf{C}) An aerial view showing the X-Y path from one experiment. The vehicle poses are superpositioned from the aerial video. The upper and lower black solid lines represent the exact trail boundaries. The blue dashed lines represent the boundaries for the center of gravity (C.G) of the vehicle. The red solid line represents the X-Y path of the vehicle's C.G position. The small green area at the end denotes the ending region of the vehicle. }
\label{fig:OffRoad}
\end{figure*}

To further demonstrate the real-world implementation capability and general usage of the algorithm, the Spatial Envelope MPC was installed on an off-road vehicle, as shown in Fig. \ref{fig:OffRoad}-A. 
The vehicle running autonomously here is a gasoline-powered four-seater Polaris MRZR.
As described in Sec. \ref{sec:VD}, the vehicle model proposed in this study is primarily designed for on-road purposes. However, as indicated in \cite{yu2024Real}, the on-road tire model fails to accurately capture tire-terrain interactions, leading to prediction errors when applied to off-road conditions.
To address these challenges, this section utilizes the vehicle model introduced in \cite{yu2024Real, dallas2021terrain}, which accounts for deformable terrain to ensure safe operation in off-road environments.
As illustrated in Fig. \ref{fig:OffRoad}-C, the scenario involves the vehicle navigating through a predefined trail without crossing the boundaries. 
This setup tests the algorithm's ability to handle real-world off-road navigation challenges.
This trail scenario is commonly encountered in off-road navigation and rally racing, making it a relevant and practical test case for the algorithm.
In this context, the two black lines represent the boundaries of the actual vehicle chassis, while the blue dashed lines indicate the derived boundaries for the vehicle's center of gravity (C.G.).
As long as the vehicle's C.G. does not intersect the blue dashed lines, the vehicle is considered to be safely within the trail boundaries.
The small green area at the end of the trail represents the vehicle's target position. The vehicle is considered to have reached the goal once its x-coordinate exceeds 100 m.	
For this scenario, the original formulation in Sec. \ref{sec:env_mpc} is modified in the cost definition as follows.

\textbf{Terminal Cost}: The terminal cost defined with $J_{\text{specific}}$ is written as: 
\begin{equation}
    \label{eq:off_road_terminal}
    J_{\text{specific}} = \frac{x_{f} - x_{\text{goal}}}{x_0 - x_{\text{goal}}}
\end{equation}
where $x_0$ and $x_f$ are the x position at the start and end point of the prediction horizon, respectively, and $x_\text{goal}$ is the x position of the goal region. 
This cost motivates the MPC to plan trajectories leading the vehicle to the goal region as quickly as possible. 
No prescribed speed is used in this formulation; the algorithm simultaneously determines the speed and steering command for the vehicle.

Fig. \ref{fig:OffRoad}-B shows the steering and speed trajectories of the vehicle and Fig. \ref{fig:OffRoad}-C displays the path plot of the vehicle's C.G. along with superimposed bird's-eye view images of the vehicle from a stationary drone footage.

The results demonstrate that the proposed method effectively identifies trajectories that stay safely within the designated boundaries.	
Instead of following the center of the track, the algorithm autonomously chooses to travel closer to the boundaries, resulting in a reduced path curvature.	
This approach provides the advantages of increased speed and enhanced time-to-goal performance.	
This improved performance is also apparent in the steering plot.	
Although the trail is modeled as a sinusoidal wave, the steering trajectory does not strictly adhere to this pattern. 
Variations in the steering angle enable the vehicle to achieve maximum speed along the trail.

In summary, the Spatial Envelope MPC demonstrates the capability to operate the vehicle in real-time within an off-road environment. 
Instead of adhering to a predefined trajectory, this method offers a high-performance solution that enhances off-road mobility within the trail.

\subsection{Computational Performance of Spatial Envelope MPC}

\begin{table}
\caption{Computational Performance of Spatial Envelope MPC}
\centering
\label{tab:Computational}
\begin{tabular}{c c c c}
\hline
  Scenario  &  \makecell{Solve Time \\ Statistics (ms)}  &   Scenario  &  \makecell{Solve Time \\ Statistics (ms)}\\
\hline


  \makecell{Sim Racing\\ (\textbf{COTA})}  & 37.5$\pm$ 12.8  
  & 
  \makecell{Sim Racing \\ (\textbf{Bahrain})} & 35.9 $\pm$ 23.1 
  \\ 
  
  \makecell{Sim Racing \\ (\textbf{Shanghai})}  & 36.6 $\pm$ 15.1
  &
  \makecell{Exp Racing \\ (\textbf{Oval})}& 63.9 $\pm$ 20.4 
  \\

  \makecell{Sim Racing\\ (\textbf{Interlagos})}  & 35.8 $\pm$ 12.9
  & 
  \makecell{Exp Racing \\ (\textbf{Thunderhill})}& 71.4 $\pm$ 25.9 
  \\ 
  
  \makecell{Sim Racing \\ (\textbf{Silverstone})} & 36.4 $\pm$ 13.4 
  &
  \makecell{Sim Emergency \\ Collision Avoidance} & 30.5 $\pm$ 10.8 
  \\ 
   
  \makecell{Sim Racing \\ (\textbf{Zandvoort})} & 38.2 $\pm$ 16.6 
  &
  \makecell{Exp Off-road \\ Navigation} & 20.9 $\pm$ 9.8
  \\ 

  \makecell{Sim Racing \\ (\textbf{Thunderhill})} & 31.7 $\pm$ 13.6 
  \\

   \hline
\end{tabular}
\end{table}

Table \ref{tab:Computational} summarizes the computational performance of each scenario in terms of mean and standard deviation of solve times. 
All simulated scenarios were computed on an Alienware R15 desktop with an Intel i9-14900K CPU clocking at 3.2 GHz, and 64 GB of memory, with optimizations performed in a single thread.	
The algorithm generated optimal control commands in approximately 35-40 ms, which is significantly below the 100 ms target threshold.
For the physical tests in racing applications, a ThinkPad laptop with a Xeon(R) W-11855M CPU, clocking at 3.2 GHz, and 32 GB of memory was used.
However, the computational performance was limited due to the laptop running on battery power.
Therefore, the average solve time for each iteration increased to 60-70 ms, but still met the 100 ms threshold on average.
For those instances where the optimization times reached 100 ms, the optimizations were terminated after 100 ms. 
If no solution was found within this time, the MPC switched to the new initial point and initiated a new calculation while still executing the optimal commands from the last successful solution.
Consequently, the maximum optimization time was capped at 100 ms.
Such overrun of the 100 ms limit happened in 64 OCPs out of a total of 1158 across the Oval and Thunderhill experiments.
For the off-road experiment, a Dell Precision 5500 Workstation laptop with an Intel Core i7-10850H processor, clocked at 2.7 GHz, and 62.6 GB of memory was used to compute the vehicle's trajectories.
These computational performances indicate that the algorithm can operate at 10 Hz for all scenarios on modern CPUs.


\subsection{Spatial Envelope Planning}\label{sec:env_planning_result}

In Fig. \ref{fig:res_shape}, the planned spatial envelopes of three design schemes are compared: pure reinforcement learning, optimization, and reinforcement learning guided optimization. In RL guided optimization, parameters $L, W$ from RL are scaled by 0.8 to ensure proper initialization. The computation time is compared in Fig. \ref{fig:res_time}. 

The reinforcement learning approach exhibits the fastest computation, but the planned blocks are not necessarily optimal, and the constraints are not guaranteed to be satisfied. 
Although solely using optimization produces near-optimal results within the constraints, the algorithm is prone to be trapped in local minimums, and has the highest computation time. 
Using reinforcement learning trained policy to guide optimization reduces the risk of local minima, enforces constraint satisfaction, and reduces computational time. 
This result demonstrates the reliability and effectiveness of the reinforcement learning guided optimization scheme for envelope design. 

\begin{figure}
    \centering
    \includegraphics[width=0.48\textwidth]{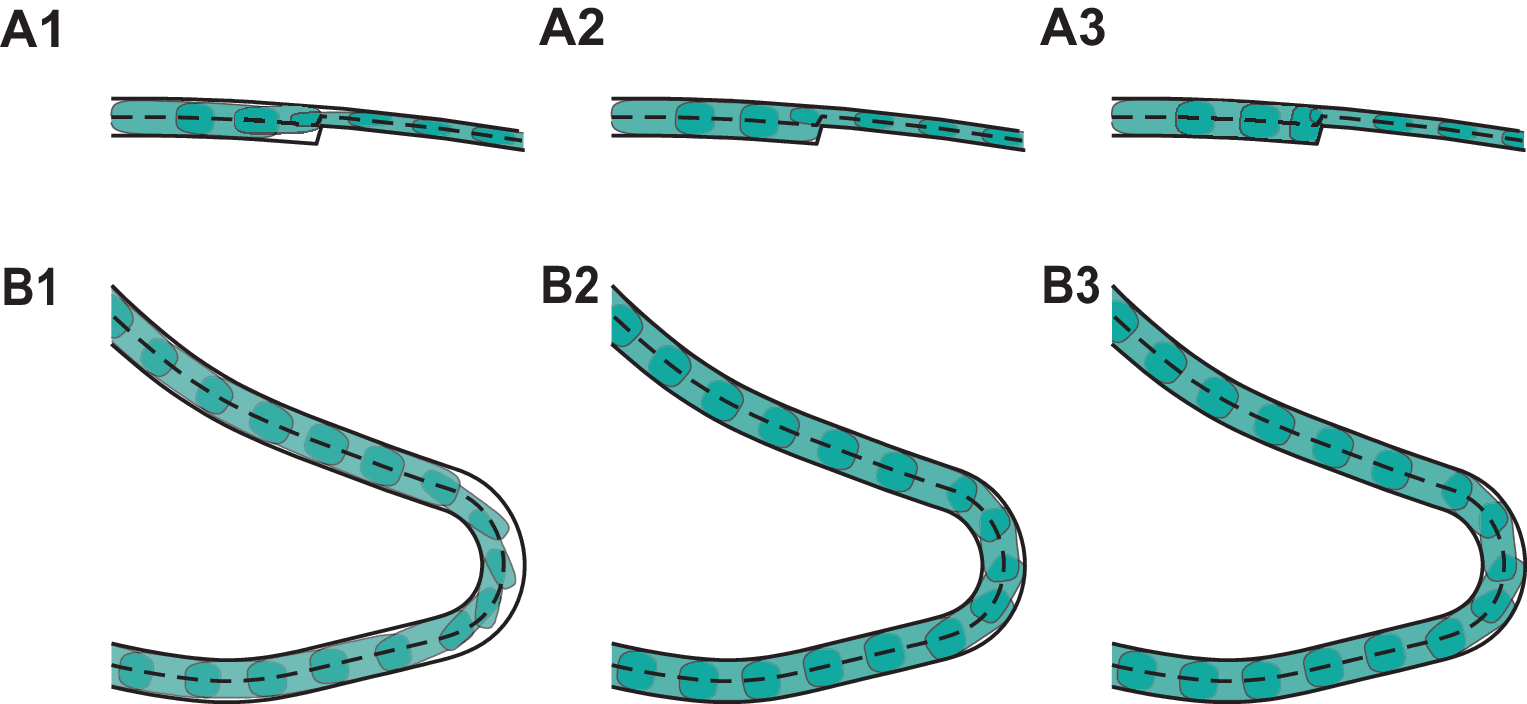}
    \caption{(\textbf{A}) A segment of the designed spatial envelopes for the Collision Imminent Steering scenario. (\textbf{B}) Spatial envelopes designed by the three methods for Thunderhill West race track. (\textbf{1}) RL policy; (\textbf{2}) Optimization only; (\textbf{3}) RL guided optimization.} 
    \label{fig:res_shape}
\end{figure}

\begin{figure}
    \centering
    \includegraphics[width=0.4\textwidth]{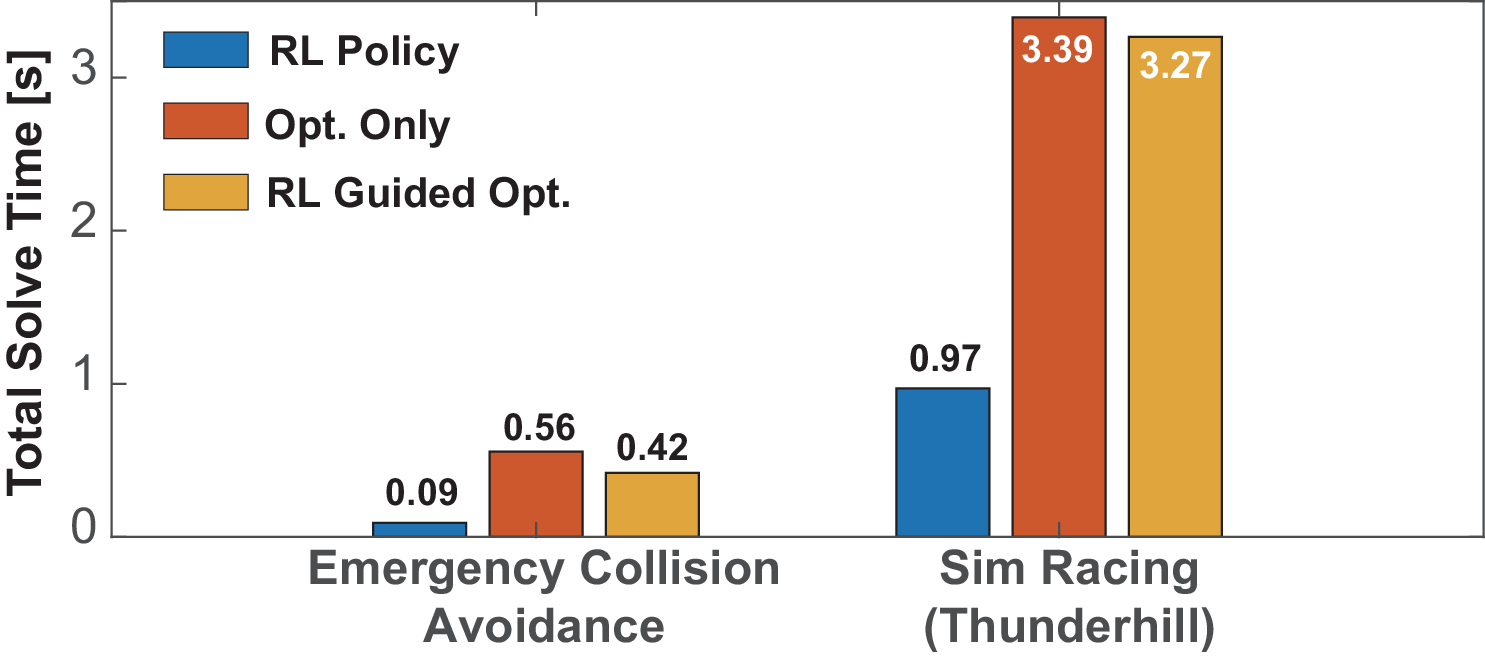}
    \caption{Comparison of computational times of the three spatial envelope design schemes for Emergency Collision Avoidance and Sim Racing (Thunderhill) scenarios.}
    \label{fig:res_time}
\end{figure}

\section{Conclusion}\label{sec:conclusion}

This study introduces a spatial envelope based model predictive control framework designed to push a vehicle to its limits autonomously, safely, and in real time.	
A new 3 DoF single-track model is introduced.
This model is specifically tailored for closed-loop optimization algorithms and captures efficiently the key vehicle dynamics in high-performance driving scenarios.	
A new mathematical formulation for spatial envelope constraints is developed to plan dynamically feasible and collision-free trajectories while maintaining scalability for complex scenarios. 
This formulation explicitly defines the entire drivable region, in contrast to the state of the art that constrains each collocation point to a specified track segment. 
As a result, the new envelope formulation removes the need for a restriction on longitudinal maneuvers.
This novelty enables, for the first time, one-level speed-varying trajectory planning and control within spatial envelope MPC and eliminates the need for a reference.
This work also presents a technique to design the envelope effectively.


The vehicle dynamics model is first validated against real-world driving data, demonstrating improved accuracy and computational efficiency compared to benchmark models. 
Building on this foundation, the Spatial Envelope MPC is evaluated across a variety of high-performance driving scenarios, including both simulated and physical racing experiments, to establish its strong performance and zero-shot generalization capabilities.
To further assess its robustness, an emergency collision avoidance scenario is designed, highlighting the controller’s ability to coordinate longitudinal and lateral tire forces in critical on-road situations.
The algorithm is then transferred to an off-road vehicle and tested on unstructured trails, illustrating its applicability beyond traditional on-road settings.
In parallel, a method for effective spatial envelope design is introduced to support constraint formulation within the planning framework. Taken together, these contributions represent a significant step forward in enhancing the mobility and versatility of autonomous vehicles operating in high-performance environments in a reference-free setting.

Assessing the robustness of the algorithm is identified as the next step, as the current work assumes perfect knowledge of the environment.	
Although the proposed Spatial Envelope MPC is motivated by and evaluated in performance driving scenarios, the same methodology could be applied to other robotic systems such as drones, wheeled robots, and robotic arms.
Applications to other problem domains are identified as important future research directions.

\bibliographystyle{IEEEtran}
\bibliography{IEEEfull}    

\begin{thebibliography}{10}
\providecommand{\url}[1]{#1}
\csname url@samestyle\endcsname
\providecommand{\newblock}{\relax}
\providecommand{\bibinfo}[2]{#2}
\providecommand{\BIBentrySTDinterwordspacing}{\spaceskip=0pt\relax}
\providecommand{\BIBentryALTinterwordstretchfactor}{4}
\providecommand{\BIBentryALTinterwordspacing}{\spaceskip=\fontdimen2\font plus
\BIBentryALTinterwordstretchfactor\fontdimen3\font minus
  \fontdimen4\font\relax}
\providecommand{\BIBforeignlanguage}[2]{{%
\expandafter\ifx\csname l@#1\endcsname\relax
\typeout{** WARNING: IEEEtran.bst: No hyphenation pattern has been}%
\typeout{** loaded for the language `#1'. Using the pattern for}%
\typeout{** the default language instead.}%
\else
\language=\csname l@#1\endcsname
\fi
#2}}
\providecommand{\BIBdecl}{\relax}
\BIBdecl

\bibitem{betz2022autonomous}
J.~Betz, H.~Zheng, A.~Liniger, U.~Rosolia, P.~Karle, M.~Behl, V.~Krovi, and
  R.~Mangharam, ``Autonomous vehicles on the edge: A survey on autonomous
  vehicle racing,'' \emph{IEEE Open Journal of Intelligent Transportation
  Systems}, vol.~3, pp. 458--488, 2022.

\bibitem{wischnewski2022indy}
A.~Wischnewski, M.~Geisslinger, J.~Betz, T.~Betz, F.~Fent, A.~Heilmeier,
  L.~Hermansdorfer, T.~Herrmann, S.~Huch, P.~Karle \emph{et~al.}, ``Indy
  autonomous challenge-autonomous race cars at the handling limits,'' in
  \emph{International Munich Chassis Symposium}.\hskip 1em plus 0.5em minus
  0.4em\relax Springer, 2022, pp. 163--182.

\bibitem{zhang2020optimization}
X.~Zhang, A.~Liniger, and F.~Borrelli, ``Optimization-based collision
  avoidance,'' \emph{IEEE Transactions on Control Systems Technology}, vol.~29,
  no.~3, pp. 972--983, 2020.

\bibitem{funke2016collision}
J.~Funke, M.~Brown, S.~M. Erlien, and J.~C. Gerdes, ``Collision avoidance and
  stabilization for autonomous vehicles in emergency scenarios,'' \emph{IEEE
  Transactions on Control Systems Technology}, vol.~25, no.~4, pp. 1204--1216,
  2016.

\bibitem{wurts2021collision}
J.~Wurts, J.~L. Stein, and T.~Ersal, ``Collision imminent steering at high
  speeds on curved roads using one-level nonlinear model predictive control,''
  \emph{IEEE Access}, vol.~9, pp. 39\,292--39\,302, 2021.

\bibitem{weber2023modeling}
T.~P. Weber and J.~C. Gerdes, ``Modeling and control for dynamic drifting
  trajectories,'' \emph{IEEE Transactions on Intelligent Vehicles}, vol.~9,
  no.~2, pp. 3731--3741, 2024.

\bibitem{goh2016simultaneous}
J.~Y. Goh and J.~C. Gerdes, ``Simultaneous stabilization and tracking of basic
  automobile drifting trajectories,'' in \emph{IEEE Intelligent Vehicles
  Symposium}, 2016, pp. 597--602.

\bibitem{goh2024beyond}
J.~Y. Goh, M.~Thompson, J.~Dallas, and A.~Balachandran, ``Beyond the stable
  handling limits: nonlinear model predictive control for highly transient
  autonomous drifting,'' \emph{Vehicle System Dynamics}, pp. 1--24, 2024.

\bibitem{dallas2021terrain}
J.~Dallas, M.~P. Cole, P.~Jayakumar, and T.~Ersal, ``Terrain adaptive
  trajectory planning and tracking on deformable terrains,'' \emph{IEEE
  Transactions on Vehicular Technology}, vol.~70, no.~11, pp. 11\,255--11\,268,
  2021.

\bibitem{shen2023efficient}
C.~Shen, S.~Yu, B.~I. Epureanu, and T.~Ersal, ``An efficient global trajectory
  planner for highly dynamical nonholonomic autonomous vehicles on {3D}
  terrains,'' \emph{IEEE Transactions on Robotics}, 2023.

\bibitem{yu2021nonlinear}
S.~Yu, C.~Shen, and T.~Ersal, ``Nonlinear model predictive planning and control
  for high-speed autonomous vehicles on {3D} terrains,''
  \emph{IFAC-PapersOnLine}, vol.~54, no.~20, pp. 4th--417, 2021.

\bibitem{beal2012model}
C.~E. Beal and J.~C. Gerdes, ``Model predictive control for vehicle
  stabilization at the limits of handling,'' \emph{IEEE Transactions on Control
  Systems Technology}, vol.~21, no.~4, pp. 1258--1269, 2012.

\bibitem{brown2017safe}
M.~Brown, J.~Funke, S.~Erlien, and J.~C. Gerdes, ``Safe driving envelopes for
  path tracking in autonomous vehicles,'' \emph{Control Engineering Practice},
  vol.~61, pp. 307--316, 2017.

\bibitem{wurts2020collision}
J.~Wurts, J.~L. Stein, and T.~Ersal, ``Collision imminent steering at high
  speed using nonlinear model predictive control,'' \emph{IEEE Transactions on
  Vehicular Technology}, vol.~69, no.~8, pp. 8278--8289, 2020.

\bibitem{stano2023model}
P.~Stano, U.~Montanaro, D.~Tavernini, M.~Tufo, G.~Fiengo, L.~Novella, and
  A.~Sorniotti, ``Model predictive path tracking control for automated road
  vehicles: {A} review,'' \emph{Annual Reviews in Control}, vol.~55, pp.
  194--236, 2023.

\bibitem{erke2020improved}
S.~Erke, D.~Bin, N.~Yiming, Z.~Qi, X.~Liang, and Z.~Dawei, ``An improved
  {A-Star} based path planning algorithm for autonomous land vehicles,''
  \emph{International Journal of Advanced Robotic Systems}, vol.~17, no.~5,
  2020.

\bibitem{osmankovic2017all}
D.~Osmankovic, A.~Tahirovic, and G.~Magnani, ``All terrain vehicle path
  planning based on {D*} {Lite} and {MPC} based planning paradigm in discrete
  space,'' in \emph{IEEE International Conference on Advanced Intelligent
  Mechatronics}, 2017, pp. 334--339.

\bibitem{dolgov2008practical}
D.~Dolgov, S.~Thrun, M.~Montemerlo, and J.~Diebel, ``Practical search
  techniques in path planning for autonomous driving,'' in \emph{Proceedings of
  the AAAI Spring Symposium}, 2008.

\bibitem{reeds1990optimal}
J.~Reeds and L.~Shepp, ``Optimal paths for a car that goes both forwards and
  backwards,'' \emph{Pacific Journal of Mathematics}, vol. 145, no.~2, pp.
  367--393, 1990.

\bibitem{lavalle1998rapidly}
S.~LaValle, ``Rapidly-exploring random trees: {A} new tool for path planning,''
  \emph{Research Report 9811}, 1998.

\bibitem{karaman2011sampling}
S.~Karaman and E.~Frazzoli, ``Sampling-based algorithms for optimal motion
  planning,'' \emph{The International Journal of Robotics Research}, vol.~30,
  no.~7, pp. 846--894, 2011.

\bibitem{webb2013kinodynamic}
D.~J. Webb and J.~Van Den~Berg, ``Kinodynamic {RRT}*: {Asymptotically }optimal
  motion planning for robots with linear dynamics,'' in \emph{IEEE
  International Conference on Robotics and Automation}, 2013, pp. 5054--5061.

\bibitem{hwan2013optimal}
J.~H. Jeon, R.~V. Cowlagi, S.~C. Peters, S.~Karaman, E.~Frazzoli, P.~Tsiotras,
  and K.~Iagnemma, ``Optimal motion planning with the half-car dynamical model
  for autonomous high-speed driving,'' in \emph{American Control Conference},
  2013, pp. 188--193.

\bibitem{turri2013linear}
V.~Turri, A.~Carvalho, H.~E. Tseng, K.~H. Johansson, and F.~Borrelli, ``Linear
  model predictive control for lane keeping and obstacle avoidance on low
  curvature roads,'' in \emph{International IEEE Conference on Intelligent
  Transportation Systems}, 2013, pp. 378--383.

\bibitem{alcala2020autonomous}
E.~Alcal{\'a}, V.~Puig, J.~Quevedo, and U.~Rosolia, ``Autonomous racing using
  linear parameter varying-model predictive control {(LPV-MPC)},''
  \emph{Control Engineering Practice}, vol.~95, p. 104270, 2020.

\bibitem{goh2020toward}
J.~Y. Goh, T.~Goel, and J.~Christian~Gerdes, ``Toward automated vehicle control
  beyond the stability limits: drifting along a general path,'' \emph{Journal
  of Dynamic Systems, Measurement, and Control}, vol. 142, no.~2, p. 021004,
  2020.

\bibitem{liu2017combined}
J.~Liu, P.~Jayakumar, J.~L. Stein, and T.~Ersal, ``Combined speed and steering
  control in high-speed autonomous ground vehicles for obstacle avoidance using
  model predictive control,'' \emph{IEEE Transactions on Vehicular Technology},
  vol.~66, no.~10, pp. 8746--8763, 2017.

\bibitem{hu2020steering}
C.~Hu, L.~Zhao, L.~Cao, P.~Tjan, and N.~Wang, ``Steering control based on model
  predictive control for obstacle avoidance of unmanned ground vehicle,''
  \emph{Measurement and Control}, vol.~53, no. 3-4, pp. 501--518, 2020.

\bibitem{lim2008nonlinear}
H.~Lim, Y.~Kang, C.~Kim, J.~Kim, and B.-J. You, ``Nonlinear model predictive
  controller design with obstacle avoidance for a mobile robot,'' in \emph{IEEE
  International Conference on Mechtronic and Embedded Systems and
  Applications}, 2008, pp. 494--499.

\bibitem{fan2024exact}
J.~Fan, N.~Murgovski, J.~Liang, and A.~Elawad, ``Exact obstacle avoidance for
  autonomous vehicles in polygonal domains,'' \emph{IEEE Transactions on
  Systems, Man, and Cybernetics: Systems}, 2024.

\bibitem{li2019dynamic}
S.~Li, Z.~Li, Z.~Yu, B.~Zhang, and N.~Zhang, ``Dynamic trajectory planning and
  tracking for autonomous vehicle with obstacle avoidance based on model
  predictive control,'' \emph{IEEE Access}, vol.~7, pp. 132\,074--132\,086,
  2019.

\bibitem{febbo2017moving}
H.~Febbo, J.~Liu, P.~Jayakumar, J.~L. Stein, and T.~Ersal, ``Moving obstacle
  avoidance for large, high-speed autonomous ground vehicles,'' in
  \emph{American Control Conference}, 2017, pp. 5568--5573.

\bibitem{gutjahr2016lateral}
B.~Gutjahr, L.~Gr{\"o}ll, and M.~Werling, ``Lateral vehicle trajectory
  optimization using constrained linear time-varying {MPC},'' \emph{IEEE
  Transactions on Intelligent Transportation Systems}, vol.~18, no.~6, pp.
  1586--1595, 2016.

\bibitem{xu2019design}
S.~Xu and H.~Peng, ``Design, analysis, and experiments of preview path tracking
  control for autonomous vehicles,'' \emph{IEEE Transactions on Intelligent
  Transportation Systems}, vol.~21, no.~1, pp. 48--58, 2019.

\bibitem{hu2020lane}
J.~Hu, S.~Xiong, J.~Zha, and C.~Fu, ``Lane detection and trajectory tracking
  control of autonomous vehicle based on model predictive control,''
  \emph{International Journal of Automotive Technology}, vol.~21, pp. 285--295,
  2020.

\bibitem{brown2019coordinating}
M.~Brown and J.~C. Gerdes, ``Coordinating tire forces to avoid obstacles using
  nonlinear model predictive control,'' \emph{IEEE Transactions on Intelligent
  Vehicles}, vol.~5, no.~1, pp. 21--31, 2019.

\bibitem{dallas2023adaptive}
M.~Thompson, J.~Dallas, J.~Y. Goh, and A.~Balachandran, ``Adaptive nonlinear
  model predictive control: maximizing tire force and obstacle avoidance in
  autonomous vehicles,'' \emph{IEEE Transactions on Field Robotics}, 2024.

\bibitem{lee2015automated}
J.~Lee, B.~Kim, J.~Seo, K.~Yi, J.~Yoon, and B.~Ko, ``Automated driving control
  in safe driving envelope based on probabilistic prediction of surrounding
  vehicle behaviors,'' \emph{International Journal of Passenger Cars-Electronic
  and Electrical Systems}, vol.~8, no. 2015-01-0314, pp. 207--218, 2015.

\bibitem{jiang2021novel}
B.~Jiang, X.~Li, Y.~Zeng, and D.~Liu, ``A novel maneuver-based driving envelope
  generation approach for driving safety assessment,'' \emph{Applied Sciences},
  vol.~11, no.~10, p. 4702, 2021.

\bibitem{yu2022autonomous}
S.~Yu, C.~Shen, and T.~Ersal, ``Autonomous driving using linear model
  predictive control with a {k}oopman operator based bilinear vehicle model,''
  \emph{IFAC-PapersOnLine}, vol.~55, no.~24, pp. 254--259, 2022.

\bibitem{bernhard_risk-based_2022}
J.~Bernhard, P.~Hart, A.~Sahu, C.~Schöller, and M.~G. Cancimance, ``Risk-based
  safety envelopes for autonomous vehicles under perception uncertainty,'' in
  \emph{{IEEE} {Intelligent} {Vehicles} {Symposium}}, 2022, pp. 104--111.

\bibitem{daniels_finding_1997}
K.~Daniels, V.~Milenkovic, and D.~Roth, ``Finding the largest area
  axis-parallel rectangle in a polygon,'' \emph{Computational Geometry},
  vol.~7, no.~1, pp. 125--148, Jan. 1997.

\bibitem{molano_finding_2012}
R.~Molano, P.~G. Rodríguez, A.~Caro, and M.~L. Durán, ``Finding the largest
  area rectangle of arbitrary orientation in a closed contour,'' \emph{Applied
  Mathematics and Computation}, vol. 218, no.~19, pp. 9866--9874, Jun. 2012.

\bibitem{liu_optimization_2014}
H.~Liu, J.~Zhou, X.~Wu, and P.~Yuan, ``Optimization algorithm for rectangle
  packing problem based on varied-factor genetic algorithm and lowest
  front-line strategy,'' in \emph{{IEEE} {Congress} on {Evolutionary}
  {Computation}}, 2014, pp. 352--357.

\bibitem{korf_optimal_2010}
R.~E. Korf, M.~D. Moffitt, and M.~E. Pollack, ``Optimal rectangle packing,''
  \emph{Annals of Operations Research}, vol. 179, pp. 261--295, 2010.

\bibitem{bojarski2017explaining}
M.~Bojarski, P.~Yeres, A.~Choromanska, K.~Choromanski, B.~Firner, L.~Jackel,
  and U.~Muller, ``Explaining how a deep neural network trained with end-to-end
  learning steers a car,'' \emph{arXiv:1704.07911}, 2017.

\bibitem{weiss2020deepracing}
T.~Weiss and M.~Behl, ``Deepracing: {A} framework for autonomous racing,'' in
  \emph{Design, Automation \& Test in Europe Conference \& Exhibition}, 2020,
  pp. 1163--1168.

\bibitem{niu2020two}
J.~Niu, Y.~Hu, B.~Jin, Y.~Han, and X.~Li, ``Two-stage safe reinforcement
  learning for high-speed autonomous racing,'' in \emph{IEEE International
  Conference on Systems, Man, and Cybernetics}, 2022, pp. 3934--3941.

\bibitem{remonda2021formula}
A.~Remonda, S.~Krebs, E.~Veas, G.~Luzhnica, and R.~Kern, ``Formula {RL}: {Deep}
  reinforcement learning for autonomous racing using telemetry data,''
  \emph{arXiv:2104.11106}, 2021.

\bibitem{song2021autonomous}
Y.~Song, H.~Lin, E.~Kaufmann, P.~D{\"u}rr, and D.~Scaramuzza, ``Autonomous
  overtaking in gran turismo sport using curriculum reinforcement learning,''
  in \emph{IEEE International Conference on Robotics and Automation}, 2021, pp.
  9403--9409.

\bibitem{laurense2019integrated}
V.~A. Laurense, \emph{Integrated motion planning and control for automated
  vehicles up to the limits of handling}.\hskip 1em plus 0.5em minus
  0.4em\relax Stanford University, 2019.

\bibitem{nuthong2010lane}
C.~Nuthong and T.~Charoenpong, ``Lane detection using smoothing spline,'' in
  \emph{International Congress on Image and Signal Processing}, vol.~2, 2010,
  pp. 989--993.

\bibitem{kreisselmeier1980systematic}
G.~Kreisselmeier and R.~Steinhauser, ``Systematic control design by optimizing
  a vector performance index,'' in \emph{Computer Aided Design of Control
  Systems}.\hskip 1em plus 0.5em minus 0.4em\relax Elsevier, 1980, pp.
  113--117.

\bibitem{nielsen2016guaranteed}
F.~Nielsen and K.~Sun, ``Guaranteed bounds on information-theoretic measures of
  univariate mixtures using piecewise log-sum-exp inequalities,''
  \emph{Entropy}, vol.~18, no.~12, p. 442, 2016.

\bibitem{molnar2023composing}
T.~G. Molnar and A.~D. Ames, ``Composing control barrier functions for complex
  safety specifications,'' \emph{IEEE Control Systems Letters}, vol.~7, pp.
  3615--3620, 2023.

\bibitem{karino2023shared}
I.~Karino, J.~Dallas, and Y.~M.~J. Goh, ``Shared control for giving ordinary
  drivers expert level drifting skills,'' in \emph{2023 IEEE International
  Conference on Systems, Man, and Cybernetics}.\hskip 1em plus 0.5em minus
  0.4em\relax IEEE, 2023, pp. 1461--1467.

\bibitem{wachterIpopt}
A.~W{\"a}chter and L.~T. Biegler, ``On the implementation of an interior-point
  filter line-search algorithm for large-scale nonlinear programming,''
  \emph{Mathematical Programming}, vol. 106, no.~1, pp. 25--57, 2006.

\bibitem{schulman2017proximal}
J.~Schulman, F.~Wolski, P.~Dhariwal, A.~Radford, and O.~Klimov, ``Proximal
  policy optimization algorithms,'' \emph{arXiv:1707.06347}, 2017.

\bibitem{savitzky1964smoothing}
A.~Savitzky and M.~J. Golay, ``Smoothing and differentiation of data by
  simplified least squares procedures.'' \emph{Analytical Chemistry}, vol.~36,
  no.~8, pp. 1627--1639, 1964.

\bibitem{rajeev2019pratheeksudi}
N.~Rajeev, ``Pratheeksudi natural frequency, ride frequency and their influence
  in suspension system design,'' \emph{J. Eng. Res. Appl}, vol.~9, pp. 60--64,
  2019.

\bibitem{talbot2023shared}
J.~Talbot, M.~Brown, and J.~C. Gerdes, ``Shared control up to the limits of
  vehicle handling,'' \emph{IEEE Transactions on Intelligent Vehicles}, vol.~9,
  no.~1, pp. 2977--2987, 2023.

\bibitem{braghin2008race}
F.~Braghin, F.~Cheli, S.~Melzi, and E.~Sabbioni, ``Race driver model,''
  \emph{Computers \& Structures}, vol.~86, no. 13-14, pp. 1503--1516, 2008.

\bibitem{yu2024Real}
S.~Yu, C.~Shen, J.~Dallas, B.~I. Epureanu, P.~Jayakumar, and T.~Ersal, ``A
  real-time terrain-adaptive local trajectory planner for high-speed autonomous
  off-road navigation on deformable terrains,'' \emph{IEEE Transactions on
  Intelligent Transportation Systems}, 2024.

\end{thebibliography}

\begin{IEEEbiography}[{\includegraphics[width=1in,height=1.25in,clip,keepaspectratio]{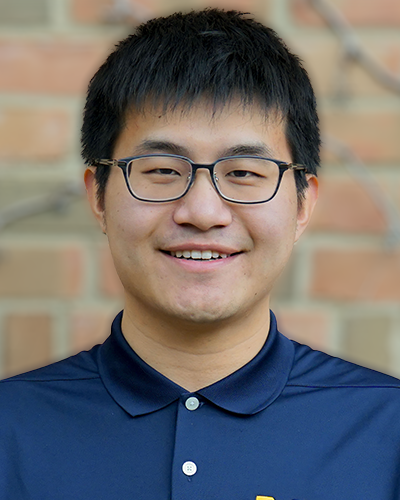}}]{Siyuan Yu}
received his B.S.E in Electrical and Computer Engineering from Shanghai Jiao Tong University, China, in 2020 and the B.S.E and M.S.E in Mechanical Engineering from University of Michigan, Ann Arbor in 2020 and 2022, respectively. He is currently pursuing the Ph.D. degree in Mechanical Engineering at the University of Michigan, Ann Arbor. His research interests include modeling, system identification, motion planning and control, with respect to vehicle systems.
\end{IEEEbiography}

\begin{IEEEbiography}[{\includegraphics[width=1in,height=1.25in,clip,keepaspectratio]{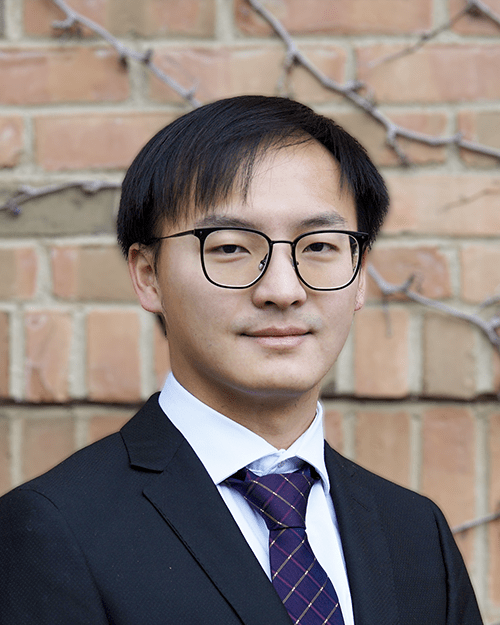}}]{Congkai Shen}
 earned his Bachelor of Science in Electrical and Computer Engineering from Shanghai Jiao Tong University, China, in 2020. Subsequently, he completed his Bachelor of Science and Master of Science in Mechanical Engineering at the University of Michigan, Ann Arbor, in 2020 and 2022, respectively. Currently, he is pursuing the Ph.D. degree in Mechanical Engineering at the University of Michigan, Ann Arbor. His research focuses on modeling, system identification, motion planning, and control, in the context of vehicle systems.
\end{IEEEbiography}

\begin{IEEEbiography}[{\includegraphics[width=1in,height=1.25in,clip,keepaspectratio]{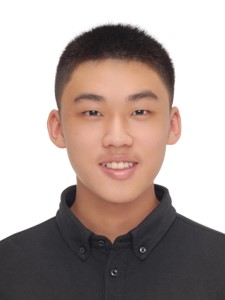}}]{Yufei Xi}
is currently pursuing B.S.E in Mechanical Engineering from Shanghai Jiao Tong University, and B.S.E in Computer Engineering from University of Michigan, Ann Arbor. His research interests include modeling, motion planning and control of dynamic systems. 
\end{IEEEbiography}

\begin{IEEEbiography}[{\includegraphics[width=1in,height=1.25in,clip,keepaspectratio]{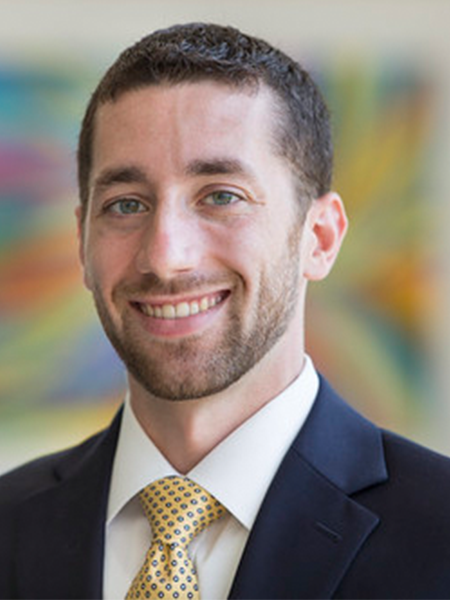}}]{James Dallas}
received the B.S. degree from the Pennsylvania State University, University Park, PA USA, and the M.S. and Ph.D. degrees from the University of Michigan, Ann Arbor, MI USA, in 2017, 2018, and 2021, respectively, all in Mechanical Engineering. He is currently a Research Scientist with the Toyota Research Institute. His research interests include modeling, system identification, system dynamics and control, shared control, and adaptive and robust optimal control, with applications to vehicle systems.
\end{IEEEbiography}

\begin{IEEEbiography}[{\includegraphics[width=1in,height=1.25in,clip,keepaspectratio]{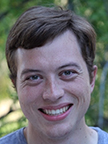}}]{Michael Thompson}
received a B.S. degree in aerospace engineering from the University of Notre Dame and a M.S. degree in aerospace engineering from Stanford University. He is currently a member of the Extreme Performance Intelligent Control group at Toyota Research Institute. His research interests include vehicle dynamics modeling and control algorithms for high performance autonomous vehicles
\end{IEEEbiography}

\begin{IEEEbiography}[{\includegraphics[width=1in,height=1.25in,clip,keepaspectratio]{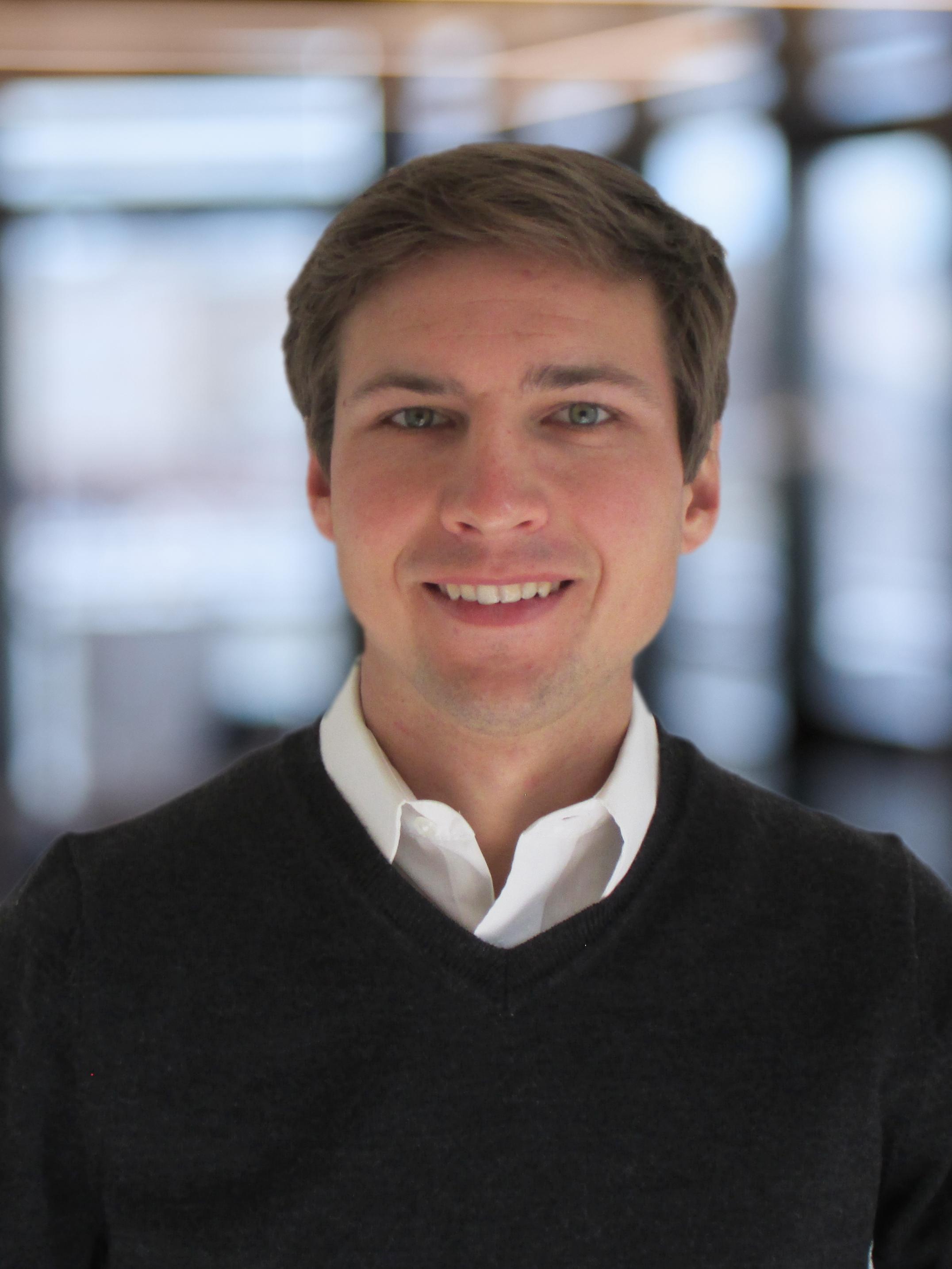}}]{John Subosits}
received his B.S.E. degree in mechanical and aerospace engineering from Princeton University and M.S. and Ph.D. degrees in mechanical engineering from Stanford University.  Currently, he leads the Extreme Performance Intelligent Control group at Toyota Research Institute (TRI).  His research interests include algorithms for vehicle control that match the performance, robustness, and adaptability of the best human (racing) drivers.
\end{IEEEbiography}

\begin{IEEEbiography}[{\includegraphics[width=1in,height=1.25in,clip,keepaspectratio]{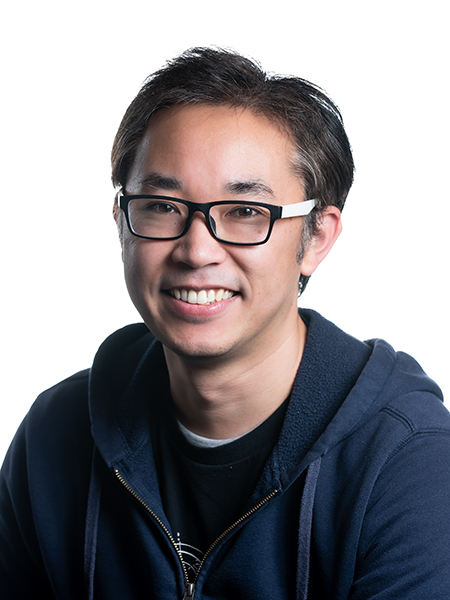}}]{Hiroshi Yasuda}
received a B.E. from the Tokyo University of Science, Japan, in 2003, and M.E. and Ph.D. in engineering from the Tokyo Institute of Technology, Japan, in 2005 and 2008. He is currently a Staff Researcher and HMI tech lead at the Toyota Research Institute in Los Altos, California. His research interests include HMIs for advanced safety systems, and augmented/mixed reality for vehicles.
\end{IEEEbiography}

\begin{IEEEbiography}[{\includegraphics[width=1in,height=1.25in,clip,keepaspectratio]{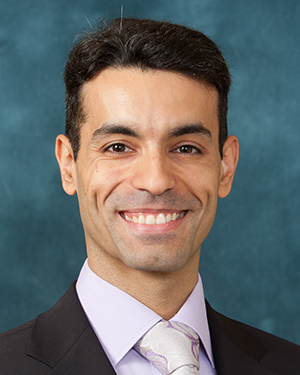}}]{Tulga Ersal}
received the B.S.E. degree from the Istanbul Technical University, Istanbul, Turkey, in 2001, and the M.S. and Ph.D. degrees from the University of Michigan, Ann Arbor, MI USA, in 2003 and 2007, respectively, all in mechanical engineering. He is currently an Associate Professor in the Department of Mechanical Engineering, University of Michigan, Ann Arbor. His research interests include modeling, simulation, and control of dynamic systems, with applications to vehicle and energy systems.
\end{IEEEbiography}

\vfill

\end{document}